\newcommand{\blind}{1}
\newtheorem{theorem}{Theorem}
\newtheorem{lemma}{Lemma}
\newtheorem{proof}{Proof}[section]
\newtheorem{definition}{Definition}
\def\RR{{\mathbb{R}}}
\def\U{{\mathcal{U}}}
\def\S{{\mathcal{S}}}
\def\pzx{\widehat{p}(\bz)}
\def\pzxr{{{\widehat{p}^*(\bz)}}}
\def\pzxp{{{\widehat{p}^+(\bz)}}}
\newcommand{\X}{\mathcal{X}}
\newcommand{\x}{\bm{x}}
\newcommand{\ba}{\bm{a}}
\newcommand{\0}{\mathbf{0}}
\newcommand{\bu}{\bm{u}}
\newcommand{\bs}{\bm{s}}
\newcommand{\bz}{\bm{z}}
\newcommand{\bX}{\mathbf{X}}
\newcommand{\bH}{\mathbf{H}}
\begin{document}

\def\spacingset#1{\renewcommand{\baselinestretch}%
{#1}\small\normalsize} \spacingset{1}

\newcommand{\red}[1]{\textcolor{red}{#1}}
\newcommand{\blue}[1]{\textcolor{blue}{#1}}

\addtolength{\textheight}{.5in}%

\if1\blind
{
  \title{\bf An optimal transport approach for selecting a representative subsample with application in efficient kernel density estimation}
  \date{}
  \author{Jingyi Zhang \\
    Center for Statistical Science, Tsinghua University\\
    \\
    Cheng Meng \footnote{Joint first author} \\
    Center for Applied Statistics, \\ Institute of Statistics and Big Data, Renmin University of China\\
    \\
    Jun Yu \\
    School of Mathematics and Statistics, Beijing Institute of Technology
    \\
    \\
    Mengrui Zhang, Wenxuan Zhong, and Ping Ma\footnote{Corresponding author}\\
    Department of Statistics, University of Georgia.
    }
  \maketitle
} \fi





\spacingset{1.5}
\begin{abstract}
Subsampling methods aim to select a subsample as a surrogate for the observed sample.
Such methods have been used pervasively in large-scale data analytics, active learning, and privacy-preserving analysis in recent decades.
Instead of model-based methods, in this paper, we study model-free subsampling methods, which aim to identify a subsample that is not confined by model assumptions.
Existing model-free subsampling methods are usually built upon clustering techniques or kernel tricks. 
Most of these methods suffer from either a large computational burden or a theoretical weakness.
In particular, the theoretical weakness is that the empirical distribution of the selected subsample may not necessarily converge to the population distribution.
Such computational and theoretical limitations hinder the broad applicability of model-free subsampling methods in practice.
We propose a novel model-free subsampling method by utilizing optimal transport techniques. 
Moreover, we develop an efficient subsampling algorithm that is adaptive to the unknown probability density function.
Theoretically, we show the selected subsample can be used for efficient density estimation by deriving the convergence rate for the proposed subsample kernel density estimator.
We also provide the optimal bandwidth for the proposed estimator.
Numerical studies on synthetic and real-world datasets demonstrate the performance of the proposed method is superior.
\end{abstract}

\noindent%
{\it Keywords:}  Subsampling; Optimal transport; Star discrepancy; Density estimation; Inverse transform sampling
\vfill

\newpage
\section{Introduction}\label{sec:intro}
A subsampling problem can be described as follows: given a $d$-dimensional sample $\{\x_i\}_{i=1}^n$ generated from an unknown probability distribution, the goal is to take a subsample $\{\x^*_i\}_{i=1}^r$, $r\ll n$, as a surrogate for the original sample.
In recent decades, the subsampling problem has drawn great attention in machine learning, statistics, and computer science. 
For example, subsampling methods are used pervasively in optimal design/active learning problems, where in a large sample of unlabeled data, the goal is to select an informative subsample to label \citep{settles2012active}.
Consider privacy-preserving analysis as another example.
In some applications, subsampling methods have the potential to enhance data security \citep{nissim2007smooth, li2012sampling}. 
Specifically, a carefully selected subset of data can reveal little confidential information \citep{shu2015privacy}.
Last but not least, subsampling methods are also widely applied in algorithm design to alleviate the computational burden in large-scale data analysis \citep{tsai2015big,zhou2017machine}.

Many existing subsampling methods are model-based methods, which assume predictors and responses, if any, follow a postulated model. 
These methods aim to select an informative subsample that benefits model-fitting and prediction.
Various models have been considered in subsampling problems, including linear regression
\citep{drineas2006fast,drineas2011faster,ma2014statistical,ma2015statistical,ma2015leveraging,wang2017computationally,meng2017effective,zhang2018statistical,ma2020asymptotic,li2020modern}, generalized linear regression \citep{wang2018optimal,ai2021boptimal,yu2020optimal}, $l_p$ regression \citep{dasgupta2009sampling}, quantile regression \citep{ai2021optimal}, streaming time series model \citep{xie2019online}, Gaussian mixture model \citep{feldman2011scalable}, nonparametric regression \citep{meng2020more,meng2021smoothing}, among others \citep{bardenet2017markov,quiroz2018speeding,yu2022subdata}.
While model-based subsampling methods have already yielded impressive achievements, the key to the success of these methods highly depends on the correct model specification.
Nevertheless, in practice, model specification is a trial and error process, and a postulated model for the data could be misspecified.
For example, in supervised learning, we start with a high dimensional model with numerous features; and by using model selection, we may end up with a low dimensional model with parsimonious features. 
In another instance, we may start with a linear regression model for a continuous response; and by discretizing the response, we may end up with a classification model.
Model-based subsampling methods, however, may result in subsamples hampering such dynamic processes of model specification \citep{tsao2012subsampling}. 
Consequently, in scenarios when the model may be misspecified or in the stage of exploratory analysis, more preferred methods are model-free subsampling methods, which can identify a subsample that is not confined by model assumptions.

Recently, there have been emerging model-free subsampling methods, which aim to select a representative subsample that can capture the overall patterns of the observed sample.
These methods can be divided into two classes: clustering-based approaches and kernel-based approaches.
Clustering-based approaches, which are usually used in unsupervised learning methods, 
include $k$-medoids method \citep{kaufman1987clustering,park2009simple}, $k$-center method \citep{feder1988optimal}, and Wasserstein barycenter method \citep{agueh2011barycenters,cuturi2014fast}.
The $k$-medoids method is closely related to the $k$-means algorithm, and the $k$-center method is used extensively in fast multipole methods \citep{greengard1991fast,white1994continuous,yang2003improved,lee2009fast}.
The Wasserstein barycenter method aims to find the barycenter of a set of empirical probability measures under the optimal transport metric, and such a barycenter itself can be regarded as a representative subsample. 
Despite wide applications of these subsampling methods, the empirical distributions of the selected subsamples, yielded by these clustering-based approaches, may not resemble the probability distribution of the original sample.
That is, as the subsample size increases, the probability distributions of the subsample identified by these methods may not necessarily converge to the true probability distribution.
To address such a limitation, 
researchers developed 
kernel-based approaches, which aim to select a subsample that can effectively approximate the population distribution.
These approaches include the kernel herding method \citep{chen2014data}, the coreset for kernel density estimation \citep{phillips2013varepsilon,zheng2013quality,zheng2017visualization}, and the support point method \citep{mak2018support}.
Despite the theoretical benefits, one limitation of these kernel-based approaches is that they may result in a large computational burden in large-scale data analysis.

To overcome the computational and theoretical limitations of the aforementioned methods, we propose a novel model-free subsampling method that is computationally efficient and enjoys nice theoretical properties. 
The proposed method combines the techniques of optimal transport and space-filling designs.
In particular, we first transform the observed sample to be uniformly distributed on a hypercube using optimal transport techniques \citep{villani2008optimal,peyre2019computational}, then select a set of data points that can effectively represent the uniform distribution using space-filling designs \citep{owen2003quasi,fang2005design}.
The desired subsample is the one corresponding to the selected data points.
The idea is analogous to an inverse procedure of the inverse transform sampling technique, which transforms a uniformly distributed sample to a sample that follows an arbitrary probability density function.
Theoretically, we show the proposed subsample kernel density estimator converges to the true probability density function under mild conditions.
Moreover, we show the proposed estimator converges faster than the estimator based on a randomly selected subsample, suggesting the proposed method can be utilized for efficient density estimation.
We also provide the optimal bandwidth for the proposed estimator.
Numerically, utilizing projection-based optimal transport methods \citep{pitie2005n,rabin2011wasserstein}, the computational cost for the proposed method is at the order of $O(n\log(n)d^2)$ for a $d$-dimensional sample of size $n$.
The proposed method thus is scalable to datasets with large $n$ and moderate $d$.
Numerical studies on synthetic and real-world datasets demonstrate the superior performance of the proposed method in comparison with mainstream competitors.
The proposed method is implemented in an R package, named \texttt{SPARTAN}.


\section{Preliminaries}

\subsection{Star discrepancy and space-filling designs}
The proposed method is developed upon the notion of star discrepancy, which is a classical metric that measures the discrepancy between a set of discrete data points and the uniform distribution on the unit hypercube $[0,1]^d$, denoted by $U[0,1]^d$ \citep{niederreiter1992random,fang1993number,fang2005design}.
Let $1\{\cdot\}$ be the indicator function and $\ba=(a_1,\ldots,a_d)\in[0,1]^d$ be a vector.
Let $[\0, \ba)=\prod_{j=1}^d[0,a_j)$ be a hyper-rectangle and $\U_r=\{\bu_i\}_{i=1}^r$ be a set of $r$ data points in $[0,1]^d$.
We introduce the definition of the star discrepancy in the following. 

\begin{definition}
Given $\U_r$ and a hyper-rectangle $[\0,\ba)$, $\ba\in[0,1]^d$, the corresponding local discrepancy is defined as,
$D(\U_r,\ba)=|\frac{1}{r}\sum^r_{i=1}1\{\bu_i\in[\0,\ba)\}-\prod^d_{j=1}a_j|.$
The star discrepancy is defined as
$$D^*(\U_r)=\underset{\ba\in [0,1]^d}{\sup}D(\U_r,\ba).$$
\end{definition}

Definition~1 suggests a set of data points $\U_r$, which can effectively represent $U[0,1]^d$, has a small value of $D^*(\U_r)$, and vice versa.
There exist methods that generate design points via directly minimizing the star discrepancy, and these methods are called uniform design methods \citep{fang2005design}.
Despite wide applications, most of these methods are computationally expensive and are not scalable to a design with a large number of points.
To alleviate such a computational burden, methods yielding a set of design points with a relatively small star discrepancy could be used as alternatives for uniform design methods. 
These alternatives include space-filling design methods \citep{wu2011experiments,fang2005design} and low-discrepancy sequences \citep{owen2003quasi,lemieux2009book,dick2013high,leobacher2014introduction}.
The former aims to generate a set of design points that spread out over the domain as uniformly as possible.
The latter sequentially generates the design points, which achieve an asymptotically fast decay rate respecting the star discrepancy.
Consequently, these methods provide powerful tools to generate a set of representative design points in terms of $U[0,1]^d$.

We now discuss the theoretical property of space-filling designs and low-discrepancy sequences in terms of the star discrepancy \citep{owen2003quasi}. 
For a Sobol sequence $\S_r=\{\bs_i\}_{i=1}^r$, a representative of low-discrepancy sequences, $D^*(\S_r)$ converges to zero at the rate of $O(\log(r)^d/r)$.
In other words, the convergence rate of $D^*(\S_r)$ is of the order $O(r^{-(1-\delta)})$ for an arbitrary small $\delta>0$ and fixed $d$, as $r$ goes to infinity. 
For comparison, when a set of data points $\X_r=\{\x_i\}_{i=1}^r$ is randomly generated from $U[0,1]^d$, the convergence rate of $D^*(\X_r)$ is of the order $O((\log\log(r)/r)^{1/2})$, which is much slower than $O(r^{-(1-\delta)})$ \citep{chung1949estimate}.
By adopting a method which is no worse than the Sobol sequence, in this paper, we always assume the star discrepancy $D^*(\S_r)$ converges to zero with the rate $O(r^{-(1-\delta)})$.
There also exist some space-filling designs that can achieve a potentially faster convergence rate in terms of star discrepancy \citep{fang2005design}.

Utilizing space-filling design techniques, we propose a simple algorithm to select a representative subsample from a sample that is generated from $U[0,1]^d$. 
Let $\{\bu_i\}_{i=1}^n$ be such a sample.
The proposed algorithm, summarized in Algorithm~1, combines space-filling design techniques and the one-nearest-neighbor approximation.

\begin{algorithm}
\caption{Select a representative subsample from a sample generated from $U[0,1]^d$.}
\begin{tabbing}
   \qquad \enspace \textit{Step 1.} Generate a set of space-filling design points $\{\bs_i\}_{i=1}^r\in [0,1]^d$\\
   \qquad \enspace \textit{Step 2.} For $i=1$ to $r$\\
   \qquad \qquad \qquad \enspace Select the nearest neighbor for $s_i$ from $\{\bu_i\}_{i=1}^n$ using the Euclidean distance\\
   \qquad \qquad \qquad \enspace Let $u^*_i$ be the selected data point \\
   \qquad \enspace \textit{Step 3.}  The final subsample is given by $\U^*_r = \{\bu^*_i\}_{i=1}^r$ 
\end{tabbing}\label{alg:ALG1}
\end{algorithm}

Lemma 1 below, which is first stated in \cite{meng2020more}, characterizes the approximation error of the subsample selected by Algorithm 1.
This lemma suggests the selected subsample can effectively approximate the design points in the sense that their corresponding star discrepancies are almost at the same order under certain conditions.

\begin{lemma}\label{lem} 
Let $\S_r=\{\bs_i\}_{i=1}^r\in[0,1]^d$ be a set of design points which satisfy $D^*(\S_r)=O(r^{-(1-\delta)})$ for any arbitrary small $\delta>0$, as $r\rightarrow\infty$.
Suppose $d$ is fixed, when $r=O(n^{1/d})$, as $n \rightarrow \infty$, we have  $D^*(\U^*_r)=O_p(r^{-(1-\delta)}).$
\end{lemma}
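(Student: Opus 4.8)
The plan is to control $D^*(\U^*_r)$ by comparing it to $D^*(\S_r)$, which is already at the rate $O(r^{-(1-\delta)})$ by hypothesis, and then showing the nearest-neighbor replacement in Algorithm~1 costs a negligible amount. First I would fix an anchor $\ba\in[0,1]^d$ and apply the triangle inequality to the local discrepancy, writing
\[
D(\U^*_r,\ba)\le D(\S_r,\ba)+\frac{1}{r}\left|\sum_{i=1}^r\left(1\{\bu^*_i\in[\0,\ba)\}-1\{\bs_i\in[\0,\ba)\}\right)\right|.
\]
Taking the supremum over $\ba$ gives $D^*(\U^*_r)\le D^*(\S_r)+E_r$, where $E_r$ denotes the supremum over $\ba$ of the indicator-difference term. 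Since $D^*(\S_r)=O(r^{-(1-\delta)})$, the whole problem reduces to proving $E_r=O_p(r^{-(1-\delta)})$.

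Next I would bound $E_r$ through a near-boundary counting argument. Let $\rho_n=\max_{1\le i\le r}\|\bs_i-\bu^*_i\|$ be the maximal nearest-neighbor distance. The $i$-th summand is nonzero only when $\bs_i$ and $\bu^*_i$ lie on opposite sides of $\partial[\0,\ba)$; since $\|\bs_i-\bu^*_i\|\le\rho_n$, such a sign flip forces $\bs_i$ within Euclidean distance $\rho_n$ of some axis-aligned face $\{x_j=a_j\}$, i.e.\ $|s_{ij}-a_j|\le\rho_n$ for some coordinate $j$. Hence the number of flipped indices is at most $\sum_{j=1}^d\#\{i:|s_{ij}-a_j|\le\rho_n\}$. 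The crucial point is that for any threshold $t$ the count $\#\{i:s_{ij}<t\}$ is exactly an anchored-box count with anchor $(1,\ldots,1,t,1,\ldots,1)$, so by Definition~1 its normalized value differs from $t$ by at most $D^*(\S_r)$. Writing each slab as a difference of two such half-spaces yields $\tfrac1r\#\{i:|s_{ij}-a_j|\le\rho_n\}\le 2\rho_n+2D^*(\S_r)$ uniformly in $a_j$, and summing over $j$ gives $E_r\le 2d\rho_n+2dD^*(\S_r)$. Consequently $D^*(\U^*_r)\le(1+2d)D^*(\S_r)+2d\rho_n$, and everything now hinges on $\rho_n$.

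To control $\rho_n$ I would use a tail bound and a union bound. For a fixed query $\bs_i$, the event $\{\|\bs_i-\bu^*_i\|>\rho\}$ means none of the $n$ uniform points falls in $B(\bs_i,\rho)\cap[0,1]^d$, whose volume is at least $c_d\rho^d$ for a dimension-only constant $c_d>0$ (the worst case being a cube corner, where a $2^{-d}$ fraction of the ball remains inside). Thus $\mathbb{P}(\|\bs_i-\bu^*_i\|>\rho)\le(1-c_d\rho^d)^n\le\exp(-c_d n\rho^d)$, and a union bound gives $\mathbb{P}(\rho_n>\rho)\le r\exp(-c_d n\rho^d)$. Taking $\rho=r^{-(1-\delta)}$ and using $r=O(n^{1/d})$, i.e.\ $n\ge c' r^d$, yields $n\rho^d\ge c' r^{d\delta}$, so $\mathbb{P}(\rho_n>r^{-(1-\delta)})\le r\exp(-c_d c' r^{d\delta})\to0$. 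Hence $\rho_n=o_p(r^{-(1-\delta)})$, and combining with the displayed bound gives $D^*(\U^*_r)=O_p(r^{-(1-\delta)})$.

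The routine part is the triangle inequality; the main obstacle is the two-sided control of $\rho_n$ together with its uniform-in-$\ba$ passage into the star discrepancy rate. The delicate steps are (i) converting a sign flip into a near-boundary condition and then re-expressing near-boundary counts as anchored-box counts, so that the assumed rate for $D^*(\S_r)$ applies uniformly in $\ba$, and (ii) the maximal nearest-neighbor bound, where the cube's boundary effect must be absorbed into $c_d$ and the union bound over $r=O(n^{1/d})$ points shown to be swamped by the exponential decay. The scaling $r=O(n^{1/d})$ is precisely the regime in which the inter-design-point spacing $\asymp r^{-1}$ matches the nearest-neighbor resolution $\asymp n^{-1/d}$ of the uniform sample, which is what makes the replacement asymptotically free and lets $\U^*_r$ inherit the discrepancy rate of $\S_r$.
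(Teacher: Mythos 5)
Your proof is correct, and it is worth noting that the paper itself never proves this lemma: it is imported verbatim from \cite{meng2020more}, and no argument for it appears in the main text or the supplement (the supplement only proves Theorem~1). Your proposal therefore supplies a self-contained proof where the paper has only a citation. The two pillars of your argument are both sound: (i) the deterministic perturbation bound $D^*(\U^*_r)\le(1+2d)D^*(\S_r)+2d\rho_n$, obtained by observing that an indicator flip forces $\bs_i$ into a slab of half-width $\rho_n$ around a face of the box, and then counting slab points \emph{against the design's own discrepancy} --- this is the step that makes the bound uniform in $\ba$ with no chaining or VC argument, since the slab-count inequality holds simultaneously for all thresholds once $D^*(\S_r)$ is controlled; and (ii) the exponential tail bound for the maximal nearest-neighbor distance, where the corner-case constant $c_d=2^{-d}V_d$ correctly absorbs the boundary effect and the union bound over $r=O(n^{1/d})$ queries is indeed swamped by $\exp(-c_dc'\,r^{d\delta})$ in the stated regime.

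Two points deserve to be made explicit if this were written out in full. First, $\rho_n$ is random, so when you plug it into the slab bound you should note that the inequality $\tfrac1r\#\{i:|s_{ij}-a_j|\le t\}\le 2t+2D^*(\S_r)$ is a deterministic statement about the fixed design $\S_r$ holding for \emph{every} threshold $t$ and every $a_j$ simultaneously; substitution of the random $\rho_n$ is then legitimate realization by realization. Second, there is a harmless endpoint technicality: the slab must be expressed through half-open anchored boxes $[\0,\ba)$ with the off-coordinates set to $1$, which requires either assuming the design lies in $[0,1)^d$ (true for Sobol-type constructions) or taking a limit $t\downarrow a_j+\rho_n$; similarly, $\mathbb{P}(\rho_n>r^{-(1-\delta)})\to 0$ as you wrote it gives $O_p$ rather than $o_p$, but inserting a factor $\epsilon$ into $\rho$ repairs the $o_p$ claim, and $O_p$ is all the lemma needs.
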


Algorithm~1 can be extended to the case that the cumulative distribution function $F$ of the samples is non-uniform when $d=1$.
The idea is analogous to the classical inverse transform sampling method \citep{devroye1986sample,mosegaard1995monte}.
Let $\{x_i\}_{i=1}^n\in\RR$ be the observed sample, we first calculate $\{F(x_i)\}_{i=1}^n$, from which, we then select a subsample $\{F(x_i^*)\}_{i=1}^r$ using Algorithm \ref{alg:ALG1}.
Notice that the transformed sample is uniformly distributed on $[0,1]$; thus, the selected subsample is relatively representative of $U[0,1]$.
Finally, the desired subsample is given by $\{x_i^*\}_{i=1}^r$. 
Although this simple strategy works well in practice, a limitation of such a strategy is that it is inapplicable when $d\geq2$ \footnote{One exception is that when all the covariates of the sample are independent with each other, in which case one can directly calculate the multivariate cumulative distribution function as the product of all the one-dimensional marginal cumulative distribution function. Nevertheless, independent covariates are rarely the case in practice.}. 
To overcome the limitation, we introduce the optimal transport map, which serves as a surrogate for $F$ in multivariate cases.
This idea is similar to the one in \cite{chernozhukov2017monge}, where the authors used the optimal transport map to extend the concepts of quantiles and ranks from one-dimensional samples to multivariate samples.
Analogously, in this paper, we use the optimal transport map to extend the technique of inverse transform sampling from one-dimensional cases to high-dimensional cases.

\subsection{Optimal transport maps}
Optimal transport maps have been extensively used as a standard technique to transform one probability distribution to another.
Recently, such maps have received a significant attention in machine learning and computer science \citep{ferradans2014regularized,rabin2014adaptive,su2015optimal,courty2017optimal,meng2020sufficient,peyre2019computational}, due to its close relationship with generative models, including generative adversarial nets \citep{goodfellow2014generative}, the ``decoder'' network in variational autoencoders \citep{kingma2013auto}, among others.

Instead of introducing the general definition of the optimal transport map, we now present a specific map of our interest, and we refer to \cite{villani2008optimal, peyre2019computational,zhang2021review} for more details.
Let $u$ be the uniform probability distribution on $[0,1]^d$.
Let $p_X$ and $\Omega\subseteq\mathbb{R}^d$ be the probability distribution and the domain of the random variable $X$, respectively.
Let $\#$ be the push-forward operator, such that for all measurable $B\subset \Omega$, we have $\phi_\#(p_X)(B)=p_X(\phi^{-1}(B))$.
Among all the maps $\phi:\Omega\rightarrow[0,1]^d$ such that $\phi_\#(p_X) = u$ and $\phi^{-1}_\#(u) = p_X$, the optimal transport map $\phi^*$ of our interest is the one that minimizes the $L_2$ cost, $\int_\Omega \|X- \phi(X) \|^2 \mbox{d}p_X$, where $\|\cdot\|$ denotes the Euclidean norm.
We focus on $L_2$ cost in this paper for simplicity and it is possible to consider other costs as long as the optimal transport map exists.
For the $L_2$ cost, as a special case, when $\Omega=\mathbb{R}$ and $d=1$, it is known that $\phi^*$ is equivalent to the cumulative distribution function $F$ \citep{villani2008optimal}.
This fact motivates us to use the $\phi^*$ as a surrogate for $F$ in high-dimensional cases.

To obtain the desired optimal transport map that maps the observed sample to be uniformly distributed on $[0,1]^d$, we propose to first generate a synthetic sample from $U[0,1]^d$, then calculate the optimal transport map from the observed sample to the synthetic sample.
One can utilize the auction algorithm or the refined auction algorithm to calculate such a map \citep{bertsekas1992auction,transport2020}.
Despite the effectiveness, the auction algorithm has an average computational cost of the order $O(n^2)$, and thus it may incur an enormous computational cost when $n$ is large.
To alleviate the computational burden, in practise, we propose to approximate the optimal transport map $\phi^*$ using projection-based methods \citep{pitie2007automated,bonneel2015sliced,rabin2011wasserstein,meng2019large,zhang2022projection}.
These methods tackle the problem of estimating a $d$-dimensional optimal transport map iteratively by breaking down the problem into a series of subproblems. 
Each of the subproblems involves finding a one-dimensional optimal transport map between the projected samples, and such a subproblem can be easily solved through sorting algorithms.
\section{Main algorithm}
We develop a novel subsampling method named SPARTAN, which integrates space-filling design techniques and optimal transport methods.
The proposed method works as follows. 
First, we transform the observed sample, denoted by $\{\x_i\}_{i=1}^n$, to be uniformly distributed on $[0,1]^d$.
We achieve this goal by utilizing the empirical optimal transport map.
Here, the empirical optimal transport map is also called the optimal matching between two discrete distributions, such that each of them have $n$ atoms and each atom has weight $1/n$.
We use such an empirical optimal transport map as a surrogate of the optimal transport map between the underlying population density function of the observed sample and the uniform distribution.
We then select a set of data points of size $r$ from the transformed sample using Algorithm~\ref{alg:ALG1}.
The subsample corresponding to the selected data points is the final output.
We summarize the algorithm below.

\begin{algorithm}
\caption{Space-filling after optimal transport (SPARTAN)}
\begin{tabbing}
   \qquad \enspace \textit{Step 1.} Generate a synthetic random sample $\{\bu_i\}_{i=1}^n$ from $U[0,1]^d$ \\
   \qquad \enspace \textit{Step 2.} Calculate the empirical optimal transport map, denoted by $\widehat{\phi}$, that maps \\ 
   \qquad \qquad \enspace the observed sample $\{\x_i\}_{i=1}^n$ to the synthetic sample $\{\bu_i\}_{i=1}^n$\\
   \qquad \enspace \textit{Step 3.}  Calculate the transformed sample $\{\widehat{\phi}(\x_i)\}_{i=1}^n$\\
   \qquad \enspace \textit{Step 4.}  Select a set of data points $\{\widehat{\phi}(\x_i^*)\}_{i=1}^r$ from $\{\widehat{\phi}(\x_i)\}_{i=1}^n$ using Algorithm \ref{alg:ALG1}\\
   \qquad \enspace \textit{Step 5.}  The final subsample is given by $\{\x_i^*\}_{i=1}^r.$
\end{tabbing}\label{alg:ALG2}
\end{algorithm}

Figure~\ref{fig1} illustrates Algorithm~\ref{alg:ALG2} using a toy example. 
A two-dimensional synthetic sample of size 1000,  marked as grey dots, is shown in Fig.~\ref{fig1}(a). 
We first transform the sample to be uniformly distributed on $[0,1]^2$ using the projection pursuit Monge map method \citep{meng2019large}, shown in Fig.~\ref{fig1}(b).
We then generate 32 design points using a space-filling design method \citep{owen2003quasi,fang2005design}. 
The design points are marked as triangles in Fig.~\ref{fig1}(c).
Next, for each design point, we search for its nearest neighbor, labeled as black dots in Fig.~\ref{fig1}(c).
Finally, the subsample corresponding to the selected data points, marked as black dots in Fig.~\ref{fig1}(d), gives the desired subsample. 

\begin{figure}[ht]\centering
        \includegraphics[width=\textwidth]{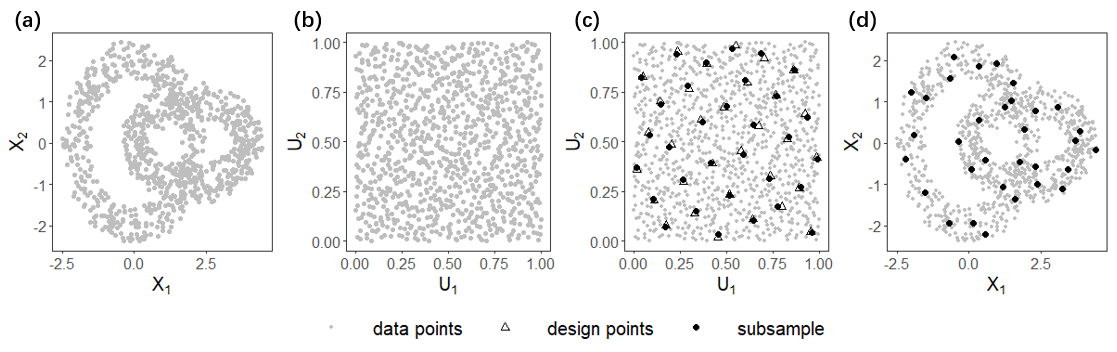}
        \caption{Illustration for Algorithm~\ref{alg:ALG2}. The two-dimensional sample, marked as gray dots in panel (a), is first transformed to be uniformly distributed on $[0,1]^2$, shown in panel (b). We then generate a set of space-filling design points, marked as  triangles, and search for the nearest neighbor for each of them, marked by black dots in panel (c). Panel (d) shows the subsample corresponding to the selected data points.} \label{fig1}
\end{figure}

The computational cost for Algorithm~\ref{alg:ALG2} mainly incurs in Step~2 and Step~4.
In particular, we use a projection-based method to approximate the desired optimal transport map in Step~2, requiring a computational cost of the order $O(n\log(n)d^2)$ \citep{pitie2007automated,bonneel2015sliced,meng2019large}.
Step~4 includes two sub-steps: generating the design points and searching the corresponding nearest neighbors.
The design points can be generated beforehand; thus, the computation time for generating these points is not considered here.
For searching the nearest neighbors, we opt to use the $k$-d tree method, whose computation cost is at the order of $O(n\log(n))$ \citep{bentley1975multidimensional,wald2006building}. 
In sum, the overall computational complexity for Algorithm~\ref{alg:ALG1} is at the order of $O(n\log(n)d^2)$.

Figure~\ref{fig2} visualizes the subsamples (black dot) selected by the proposed method (lower row) compared with the subsamples selected by the random subsampling method (upper row).
The two-dimensional samples (grey dots) are generated from three different distributions: the standard Gaussian distribution (left column), a mixture Gaussian distribution (middle column), and a mixture beta distribution (right column).
From plots in the left column, one can observe that the randomly selected subsample is far from symmetric.
From plots in the middle and the right columns, one can see that some peaks in the probability distribution are largely overlooked by the random subsampling method.
We observe that the subsamples identified by the proposed method have a more robust and appealing visual representation of the corresponding probability distribution in all the cases.

\begin{figure}[ht]\centering
        \includegraphics[width=\textwidth]{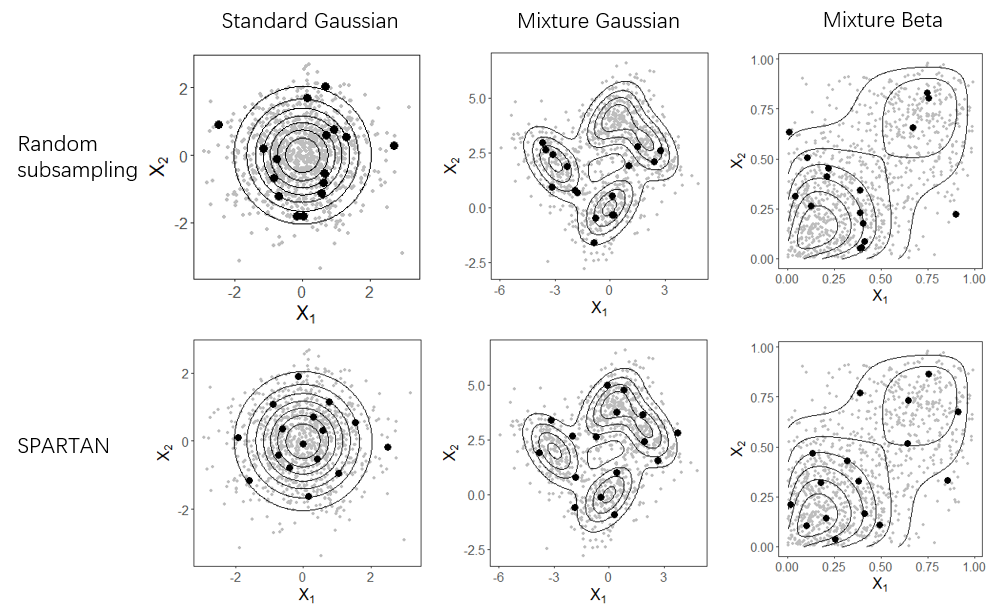}
        \caption{Subsamples (black dots) selected by the proposed method (lower) versus randomly selected subsamples (upper). Contours (black) are superimposed. One can observe the proposed method selects subsamples that have more appealing visual representation of the corresponding population.} \label{fig2}
\end{figure}

\section{Theoretical results}
In this section, we study the theoretical properties of the subsamples obtained in Algorithm~\ref{alg:ALG2}. In particular, we develop an asymptotic theory concerning the rates of convergence of the estimated density to the true density as the sample size goes to infinity. 
The rates are calculated in terms of the point-wise mean squared error (MSE) that defined as
$\mbox{MSE}(\widehat{p}(\bz))=  E\{\widehat{p}(\bz)-p(\bz) \}^2$, where $\bz\in\RR^d$, $\widehat{p}$ is the density estimator and ${p}$ is the true density. 
The density is estimated using the widely-used kernel density estimation method.
Throughout this paper, we consider the Gaussian kernel.
The extension of the main theorem to other kernel functions is straightforward, as long as such a kernel function satisfies some regularity conditions, which are relegated to the Supplementary Material.
A more in-depth discussion on different choices of kernel functions can be found in \cite{scott2015multivariate}.
To avoid trivial cases, we consider the case that $d\ge 2$ in this section.
Without lose of generality, we assume the points $\{\x_i\}$ are distinct, and the points $\{\bu_i\}$ are distinct.
In such cases, the optimal transport map in Step~2 of Algorithm~2 is a one-to-one map from $\{\x_i\}_{i=1}^n$ to $\{\bu_i\}_{i=1}^n$.
Let $p$ be the probability density function to be estimated. 
Two widely-used regularity conditions for $p$ are required in kernel density estimation,
\begin{itemize}
    \item Condition (a). $\partial^2 p(z)/\partial z_j^2$ is absolutely continuous, for $j=1,\ldots,d$,;
    \item Condition (b). $\partial^3 p(z)/\partial z_j^3$ is square-integrable, for $j=1,\ldots,d$.
\end{itemize}

Let $\bX\in \RR^{n\times d}$ be the sample matrix, where the $(i,j)$-th element is $x_{ij}$, and $\bX^*\in \RR^{r\times d}$ be the subsample matrix, where the $(i,j)$-th element is $x_{ij}^*$.
Let $h>0$ be the bandwidth and $K:\RR\rightarrow\RR$ be a kernel function.
For any $\bz\in \RR^d$, the full-sample product kernel density estimator can be written as
\begin{eqnarray}\label{eqn:prod}
\pzx = \sum_{i=1}^n \left[\prod_{j=1}^d K\left\{(z_j-x_{ij})/h\right\}/h\right]/n.
\end{eqnarray}

Equation~(\ref{eqn:prod}) can be generalized to a more general multivariate kernel density estimator.
In particular, for a $d\times d$ nonsingular bandwidth matrix $\bH$ and a multivariate kernel function $\mathcal{K}:\mathbb{R}^d\rightarrow \mathbb{R}$, a general multivariate kernel estimator can be written as
\begin{eqnarray}\label{eqn:general}
\widehat{p}_{general}(z) = \frac{1}{n|\bH|}\sum_{i=1}^n \left[ \mathcal{K}\left\{\bH^{-1}(\bz-\x_i)\right\}\right].
\end{eqnarray}
It is apparent that Equation~(\ref{eqn:general}) is equivalent to Equation~(\ref{eqn:prod}) when $\bH=h\cdot \mathbf{I}_d$, where $\mathbf{I}_d$ is the identity matrix.
Let $\mathcal{K}$ be the Gaussian kernel in Equation~(\ref{eqn:general}), it is equivalent to choose $\mathcal{K}=\mathcal{N}(\0,\bm{\Sigma})$ with $\bH=\mathbf{I}_d$, or to choose $\mathcal{K}=\mathcal{N}(\0,\mathbf{I}_d)$ with $\bH=\bm{\Sigma}^{1/2}$ in Equation~(\ref{eqn:prod}). 
Consequently, with a properly chosen kernel function, one can reformulate a general multivariate kernel estimator to a product kernel density estimator.
We thus only focus on the product kernel density estimator in this section without loss of generality.

Analogous to Equation~(\ref{eqn:prod}), the density estimator $\pzxr$ that computed from the subsample can be written as
\begin{eqnarray*}
\pzxr = \sum_{i=1}^r \left[\prod_{j=1}^d K\left\{(z_j-x_{ij}^*)/h\right\}/h\right]/r.
\end{eqnarray*}
We derive the convergence rate for the mean squared error for the proposed subsample estimator.
The results are summarized in Theorem 1 below, and the proof is relegated to Appendix.

\begin{theorem}
Suppose 
$p$ satisfies Conditions (a) and (b).
Moreover, suppose $p$ has a compact convex domain $\Omega \subset \mathbb{R}^d$, and  there exists a constant $c\ge 1$ for which $c^{-1}\le p(\x)\le c$ for any $\x\in \Omega$.
When $d\ge 2, r=O(n^{1/d})$, as $n\rightarrow\infty$ and $h\rightarrow 0$, for any arbitrary small $\delta>0$, we have
$$\mbox{MSE}(\pzxr) = O{\left(\frac{1}{r^{2(1-\delta)} h^{d+2}}\right)} + O(h^4).$$
In particular, if $h=O(r^{-2(1-\delta)/(d+6)})$, we have
\begin{eqnarray}\label{maineqn}
\mbox{MSE}(\pzxr)= O(r^{-8(1-\delta)/(d+6)}).
\end{eqnarray}
\end{theorem}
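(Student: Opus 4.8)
The plan is to bound $\mbox{MSE}(\pzxr)$ by separating a quadrature error from the usual kernel-smoothing bias. Writing $K_h(\bz,\x)=\prod_{j=1}^d h^{-1}K\{(z_j-x_j)/h\}$ and letting $\bar{p}_h(\bz)=\int_\Omega K_h(\bz,\x)p(\x)\,d\x$ denote the smoothed density, I would start from
$$\mbox{MSE}(\pzxr)\le 2\,E\{\pzxr-\bar{p}_h(\bz)\}^2+2\{\bar{p}_h(\bz)-p(\bz)\}^2.$$
The second summand is the classical kernel bias: under Conditions (a) and (b), a Taylor expansion of $p$ around $\bz$ gives $\bar{p}_h(\bz)-p(\bz)=O(h^2)$, which contributes the $O(h^4)$ term. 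All of the difficulty lies in the first summand.

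First I would rewrite the first summand as a numerical-integration error on the cube using the optimal transport map. Since the population map $\phi$ pushes the density of $X$ forward to $U[0,1]^d$, the change of variables $\bu=\phi(\x)$ gives $\bar{p}_h(\bz)=\int_{[0,1]^d}g(\bu)\,d\bu$ with $g(\bu):=K_h(\bz,\phi^{-1}(\bu))$, while $\pzxr=r^{-1}\sum_{i=1}^r g(\bu_i^*)$, up to the discrepancy between the empirical map $\widehat{\phi}$ and $\phi$. Thus $\pzxr-\bar{p}_h(\bz)$ is exactly the error of integrating $g$ at the transported subsample $\{\bu_i^*\}$, which are the low-discrepancy points generated by Algorithm~1. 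The hypothesis $r=O(n^{1/d})$ is precisely what lets me invoke Lemma~\ref{lem} to obtain $D^*(\{\bu_i^*\})=O_p(r^{-(1-\delta)})$, transferring the discrepancy of the design points to the selected subsample despite the nearest-neighbour step.

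The key step is to bound this quadrature error by the star discrepancy times a suitable norm of $g$ that captures its dependence on $h$. A direct Koksma--Hlawka estimate $|\pzxr-\bar{p}_h(\bz)|\le V(g)\,D^*(\{\bu_i^*\})$ with the Hardy--Krause variation $V(g)=O(h^{-d})$ would only yield an $O(r^{-2(1-\delta)}h^{-2d})$ bound, which is too crude for $d\ge 3$. To reach the stated $h^{-(d+2)}$ power I would instead control the error through a first-order, gradient-based estimate: the bounded-Jacobian regularity of $\phi^{-1}$ gives $\int_{[0,1]^d}\|\nabla g\|^2=O(h^{-(d+2)})$, so the relevant seminorm of $g$ scales like $h^{-(d+2)/2}$, and pairing it with an $L^2$-type discrepancy controlled by $D^*(\{\bu_i^*\})=O_p(r^{-(1-\delta)})$ gives $E\{\pzxr-\bar{p}_h(\bz)\}^2=O(r^{-2(1-\delta)}h^{-(d+2)})$. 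Combining the two summands yields $\mbox{MSE}(\pzxr)=O(r^{-2(1-\delta)}h^{-(d+2)})+O(h^4)$, and balancing the two terms gives the optimal $h=O(r^{-2(1-\delta)/(d+6)})$ and the final rate $O(r^{-8(1-\delta)/(d+6)})$.

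The hard part will be the norm/variation estimate for $g$ together with the regularity it demands. The gradient bound relies on $\phi^{-1}$ having a bounded Jacobian, which is exactly where the assumptions that $\Omega$ is compact and convex and that $c^{-1}\le p\le c$ enter, through Caffarelli's regularity theory for the Brenier map; without these, $g$ could be arbitrarily steep and the $h$-power would degrade. A second genuine difficulty is justifying the replacement of the empirical transport map $\widehat{\phi}$ by the population map $\phi$ in the reduction above: I would show the induced perturbation of the integrand is of smaller order than $r^{-(1-\delta)}h^{-(d+2)/2}$ as $n\to\infty$, using convergence of the empirical optimal matching together with the $O(n^{-1/d})$ scale of the nearest-neighbour approximation under $r=O(n^{1/d})$.
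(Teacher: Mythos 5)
Your proposal follows essentially the same route as the paper's proof: the same decomposition into a kernel-smoothing bias $O(h^4)$, a quadrature error on $[0,1]^d$, and a perturbation from replacing the population map $\phi^*$ by the empirical map $\widehat\phi$ (the paper's three terms $E|\widehat{p}_{\rm est.}(\bz)-\pzxr|^2$, $E|\pzxr-E(\pzx)|^2$, $E|E(\pzx)-p(\bz)|^2$), with the same ingredients — Lemma 1 under $r=O(n^{1/d})$ to get $D^*(\U^*_r)=O_p(r^{-(1-\delta)})$, Caffarelli-type regularity of the transport map (the paper's Lemma S3, which is exactly where compactness, convexity, and $c^{-1}\le p\le c$ enter) to bound the Jacobian of $(\phi^*)^{-1}$ and obtain the $h^{-(d+2)}$ scaling, and an empirical-optimal-matching ($W_\infty$) bound (the paper's Lemma S4) to show the $\widehat\phi$-versus-$\phi^*$ perturbation is of negligible order. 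Even your "first-order, gradient-based" bound in place of a raw Hardy--Krause estimate is what the paper does in substance: it invokes Koksma--Hlawka with the variation taken as $\int_{[0,1]^d}\|\nabla g_{\bz}(\bu)\|\,d\bu$ and then applies Jensen's inequality to reach $\big(D^*(\U^*_r)\big)^2\int\|\nabla g_{\bz}\|^2\,d\bu = O\big(r^{-2(1-\delta)}h^{-(d+2)}\big)$, followed by the same bandwidth balancing.
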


Theorem~1 shows the proposed subsample estimator converges to the true probability density function.
Moreover, Theorem~1 indicates the proposed subsampling method can be used for efficient density estimation.
Specifically, let $\bX^+\in\RR^{r\times d}$ be a randomly selected subsample matrix, and $\pzxp$ be the corresponding subsample estimator.
According to Theorem 6.4 of \cite{scott2015multivariate}, as $r=o(n)$ and $n\rightarrow\infty$, when $h=O(r^{-1/(4+d)})$, $\mbox{MSE}(\pzxp)$ achieves the optimal convergence rate $O(r^{-4/(d+4)})$ for any $z\in\Omega$.
Such a convergence rate is much slower than the convergence rate in Equation~(\ref{maineqn}).
Consequently, Theorem~1 indicates one can approximate the probability density function $p$ more efficiently using the proposed subsample kernel density estimator, compared with the counterpart based on a randomly selected subsample.

Consider the bandwidth $h$, or generally, the bandwidth matrix $\bH\in\mathbb{R}^{d\times d}$.
In practice, one can determine the value of $\bH$ through the plug-in approach or the cross-validation approach \citep{duong2003plug,chacon2010multivariate,scott2015multivariate}.
One limitation of these approaches, however, is that they may result in a computational burden for the sample with moderate or large $n$.
To combat the computational burden, we opt to determine the value of $\bH$ using the general Scott's rule \citep{scott2015multivariate}, which suggests to use $\bH=r^{-1/(d+4)}\times\widehat{\bm{\Sigma}}^{1/2}$ for a subsample kernel density estimator that based on a subsample of size $r$. 
Here, $\widehat{\bm{\Sigma}}$ is the empirical variance-covariance matrix for the observed sample.
Analogously, as suggested by Theorem~1, we also consider using $\bH=r^{-2/(d+6)}\times\widehat{\bm{\Sigma}}^{1/2}$ for the proposed estimator. 
Consider the essential condition in Theorem~1, which requires the domain of $p$ to be compact convex.
Empirically, we find the proposed estimator still works reasonably well when such a condition does not hold, as shown in the following section.

\section{Simulation Results}
To evaluate the proposed subsampling method, we compare it with three mainstream competitors in terms of the estimation accuracy of the kernel density estimator.
The competitors include the uniform subsampling method, also called the random subsampling method, the $k$-medoids method, and the support point method \citep{mak2018support}.
We use the projection-pursuit Monge map method \citep{meng2019large} for approximating the optimal transport map in Algorithm~2.
All the methods are implemented in \texttt{R}, and all the parameters are set as default.

For each subsampling method, we first calculate the subsample kernel density estimator $\widehat{p}(\x)$, then evaluate the accuracy of which using the Hellinger distance \citep{li2016density}, defined as
$1-\sum_{i=1}^n\sqrt{\widehat{p}(x_i)/p(x_i)}/n,$
where $\{\x_i\}_{i=1}^{n}$ is an independent testing dataset generate from the same probability density function as the training sample.
Empirically, we find other metrics, like the mean squared error considered in Theorem~1, also yield similar performance.
For the kernel density estimator, we use the Gaussian kernel and the general Scott's rule \citep{scott2015multivariate} to determine the bandwidth matrix.
In particular, for all the subsample estimator, the bandwidth matrix $\bH=r^{-1/(d+4)}\times\bm{\widehat\Sigma}^{1/2},$ where $\bm{\widehat\Sigma}$ is the empirical variance-covariance matrix.
For the proposed method, we also consider the cases that $\bH=r^{-2/(d+6)}\times\bm{\widehat\Sigma}^{1/2},$ according to Theorem~1.
The standard errors are calculated through a hundred replicates.
In each replicate, we generate a synthetic training sample with $n=10^4$ from $d=\{2,5,10,20\}$ and each of the following three probability density functions,
\begin{itemize}
    \item \texttt{D1}: A Gaussian distribution $\mathcal{N}(\0,\bm{\Sigma})$, where $\bm{\Sigma}_{ij} = 0.5^{|i-j|}$, $i,j=1,...,d$;
    \item \texttt{D2}: A mixture Gaussian distribution \\ 
    $\mathcal{N}(\mathbf{1}, \bm{\Sigma})/4+\mathcal{N}(\mathbf{-1}, \bm{\Sigma})/4+\mathcal{N}(\0, \bm{\Sigma})/2$, where $\bm{\Sigma}=0.8^{|i-j|}$, $i,j=1,...,d$.
    \item \texttt{D3}: A mixture $t$-distribution, whose degree-of-freedom equals 8,10, and 12, \\
    $t(\0, \bm{\Sigma}, 8)/3+t(\0, \bm{\Sigma}, 10)/3+t(\0, \bm{\Sigma}, 12)/3$, where $\bm{\Sigma}=0.8^{|i-j|}$, $i,j=1,...,d$.
\end{itemize}

\begin{figure}\centering
        \includegraphics[width=\textwidth]{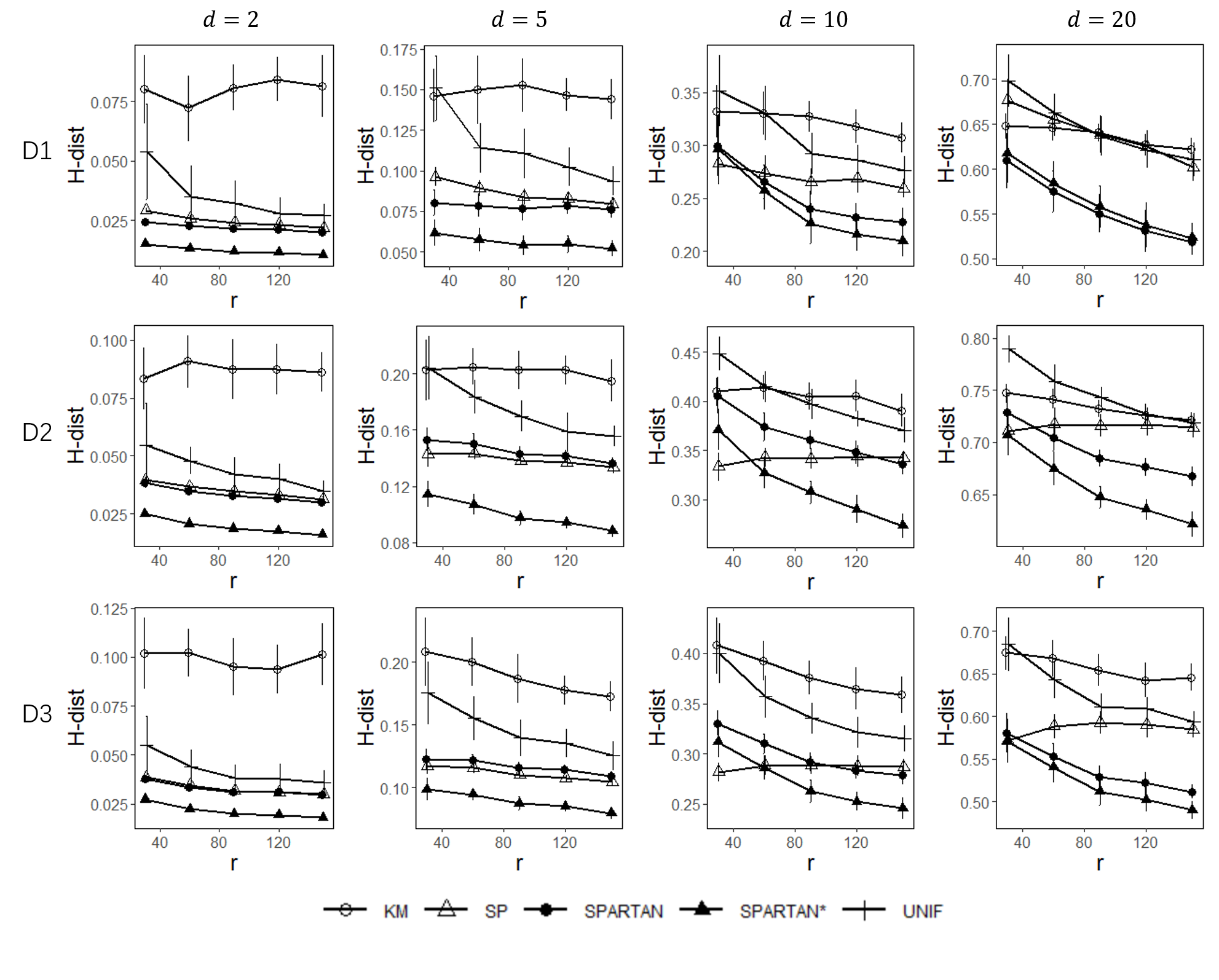} 
        \caption{Simulation under different $d$ (from left to right) and different probability density functions (from upper to lower). 
        The Hellinger distance (H-dist) are plotted versus different $r$.
        Vertical bars represent the standard errors.
        } \label{fig3}
\end{figure}

Figure~\ref{fig3} shows the Hellinger distance versus different $r$ under various settings. 
Each row represents a particular data distribution \texttt{D1}--\texttt{D3}, and each column represents a particular $d$.
We use crosses to denote the uniform subsampling method (UNIF), hollow circles to denote the K-medoids method (KM), hollow triangles to denote the support point method (SP), solid circles to denote the proposed method (SPARTAN), and solid triangles to denote the proposed method with $\bH=r^{-2/(d+6)}\times\bm{\widehat\Sigma}^{1/2}$ (SPARTAN$^*$).

Three significant observations can be made from Fig.~\ref{fig3}.
We first observe that the K-medoids method performs worse than the uniform subsampling method in almost all cases. 
Moreover, the support point method outperforms the uniform subsampling method in all cases.
We also observe the Hellinger distance yielded by these two methods do not converge to zero in some cases.
Such an observation can be attributed to the fact that the probability distribution of the subsample identified by these two methods may not necessarily converge to the true probability distribution.

Second, we observe the Hellinger distance yielded by the proposed method decreases as $r$ increases.
Moreover, the proposed method outperforms the uniform subsampling method in all cases.
These observations are consistent with Theorem~1, which indicates the proposed subsample estimator converges to the true probability density function and is more efficient than the estimator corresponding to the uniform subsampling method.

Third, we observe the proposed estimator with $\bH=r^{-1/(d+4)}\times\bm{\widehat\Sigma}^{1/2}$ outperforms the other three competitors in most of the cases.
As the same bandwidth matrices are applied in all these estimators, such a comparison is fair.
Consequently, the aforementioned observation suggests the subsample identified by the proposed subsampling method is more representative of the observed sample than the subsamples selected by the other three methods.
We also observe the proposed estimator with $\bH=r^{-2/(d+6)}\times\bm{\widehat\Sigma}^{1/2}$ consistently outperforms the one with $\bH=r^{-1/(d+4)}\times\bm{\widehat\Sigma}^{1/2}$.
This observation is consistent with Theorem~1, which suggests $h=O(r^{-2(1-\delta)/(d+6)})$ yields the smallest upper bound of the asymptotic integrated mean squared error for the proposed estimator.

\section{Real data example}
\subsection{Density estimation}
Throughout this section, we consider the banknote authentication dataset, which is extracted from images that were taken from 1372 genuine and forged banknotes. 
Wavelet transform was used to extract four features from images \footnote{The dataset can be downloaded from https://archive.ics.uci.edu/ml/datasets/banknote+authentication}.
To evaluate the performance of the proposed subsampling method, we compare it with other competitors in terms of the accuracy of the kernel density estimation and the prediction accuracy in active learning. 
A brief introduction to active learning will be given later.

We first visualize the banknote authentication dataset and the subsample selected by the proposed method.
In Fig.\ref{fig5}, the lower diagonal panels show the scatter plots for each pair of the predictors. 
We select a subsample of size fifty, and the scatter plots for such a subsample are shown in the upper panels of Fig.\ref{fig5}.
The heat maps are obtained using kernel density estimation.
We observe the selected subsample has an appealing visual representation of the original sample.

\begin{figure}\centering
        \includegraphics[width=0.9\textwidth]{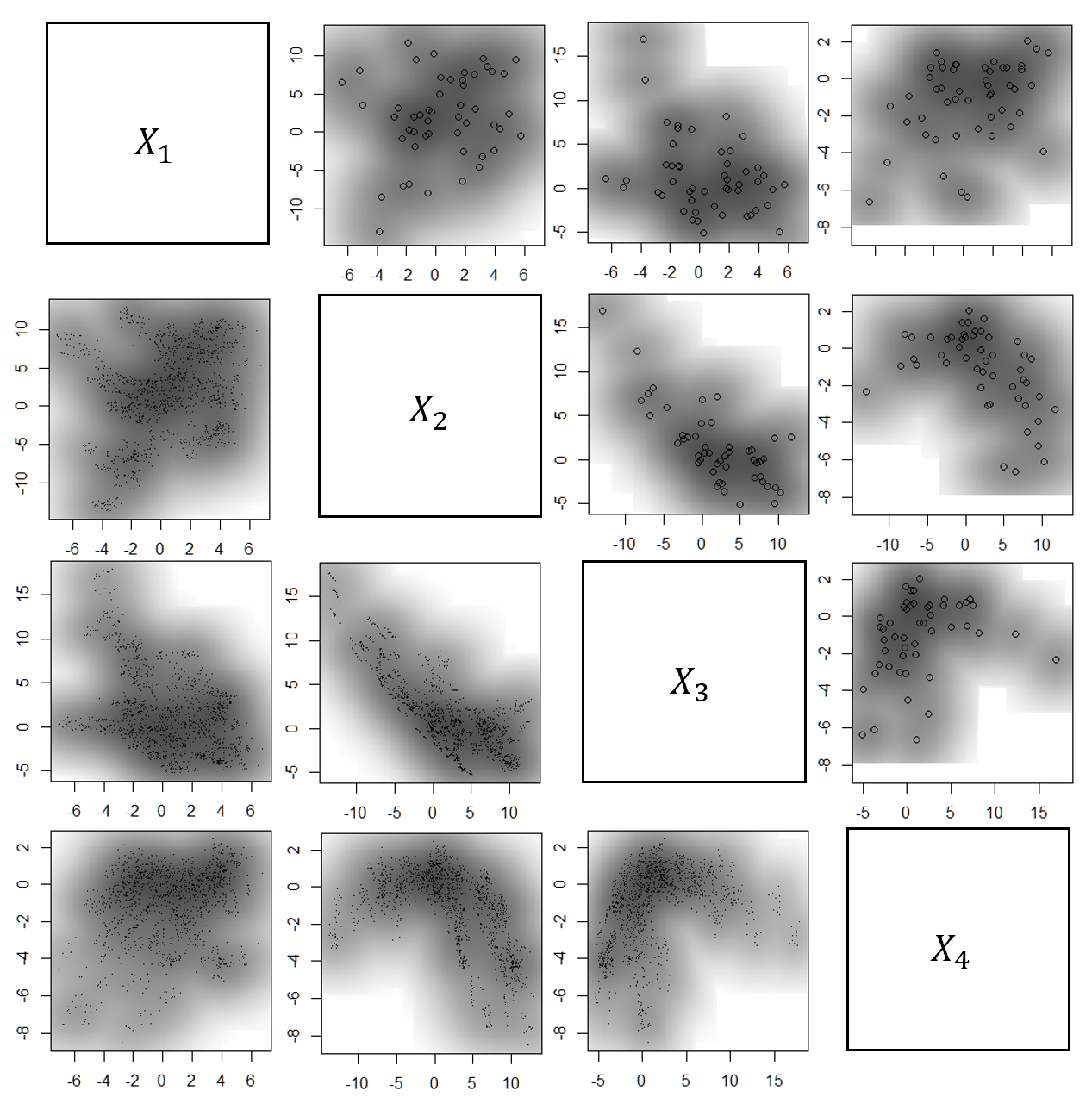}
        \caption{Visualization of the banknote authentication dataset. The lower diagonal panels show the scatter plots for each pair of predictors. The upper diagonal panels show the scatter plots for the selected subsample using the proposed algorithm. The heat maps are obtained using kernel density estimation.
        } \label{fig5}
\end{figure}

For density estimation, we consider three competitors, as mentioned in the previous section.
Same as the settings stated in the previous section, we used the Gaussian kernel for kernel density estimators and the general Scott’s rule to determine the value of the bandwidth matrix.
All the parameters are set as the same as the ones we used in the previous section.
We replicated the experiment twenty times.
In each replication, the dataset is randomly divided into the training set and the testing set of equal sizes.
We first calculate the full sample kernel density estimator using the testing set, denoted by $\widehat{p}_{full}$.
For each subsample kernel density estimator, we then evaluate its estimation accuracy through the empirical Hellinger distance, defined as
$1-\sum_{i=1}^n\sqrt{\widehat{p}(x_i)/\widehat{p}_{full}(x_i)}/n,$
where $\{\x_i\}_{i=1}^n$ represents the testing set.
This empirical Hellinger distance is not a formal distance and thus may have negative values, as we will see later.
Nevertheless, the empirical Hellinger distance can be used as a surrogate for the true Hellinger distance since a small value of the empirical Hellinger distance is associated with a small value of the true Hellinger distance, intuitively.

\begin{figure}\centering
        \includegraphics[width=\textwidth]{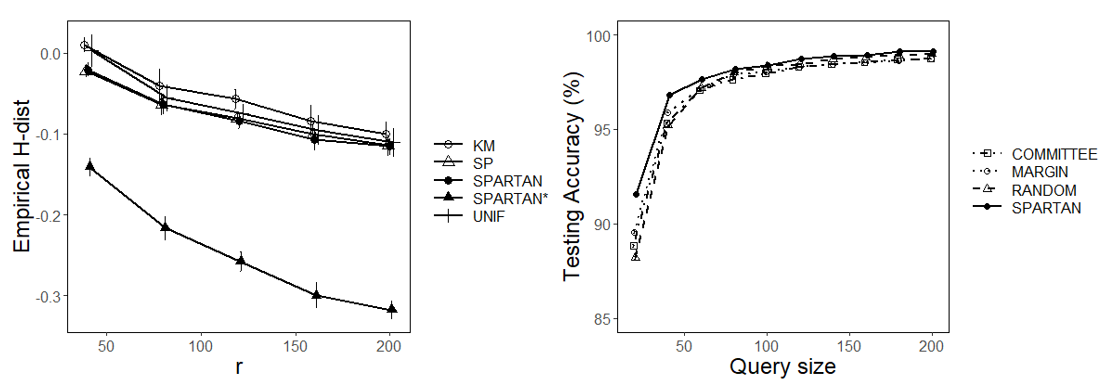} \\
        \caption{\footnotesize{Left: For the density estimation of the banknote authentication dataset, the empirical Hellinger distance (H-dist) is plotted versus different subsample sizes $r$.
        Right: For the active learning of the banknote authentication dataset, the testing accuracy is plotted versus different query sizes.
        Vertical bars represent the standard errors. In the right panel, the standard errors are tiny, and thus the error bars are almost invisible.}
        } \label{fig4}
\end{figure}

The left panel of Fig.~\ref{fig4} shows the empirical Hellinger distance versus different subsample sizes $r$. 
The standard error bars are obtained from one hundred replicates.
We observe that the uniform subsampling method consistently outperforms the K-medoids method.
We then observe that the proposed method and the support point method perform similarly, and both have better performance than the uniform subsampling method.
Finally, we observe the proposed estimator with $\bH=r^{-2/(d+6)}\times\bm{\widehat\Sigma}^{1/2}$, as guided by Theorem~1, gives the best result.
All these observations are consistent with the findings in the previous section.

\subsection{Active learning}
We now consider the task of active learning, which aims to make an accurate prediction, with the number of labeled training data points as small as possible \citep{krogh1995neural,cohn1996active}.
These approaches are essential for numerous sophisticated supervised learning tasks, where the labeled instances are challenging, time-consuming, or expensive to obtain. 
Take speech recognition as an example; accurate labeling of speech utterances is extremely time-consuming and requires trained linguists. 
It is reported that annotation at the level of the phoneme can take 400 times longer than the actual audio \citep{settles2012active}.
In general, active learning approaches select the data points (also termed as the query points) iteratively and interactively.
In each iteration, one query the oracle to obtain the label at a new query point, based on certain criteria. 
It is known that a representative subsample is potentially associated with an accurate prediction in active learning \citep{settles2012active}.


The proposed subsampling method can be cast as an active learning approach. 
In particular, we generate the Sobol sequence \citep{owen2003quasi} in Algorithm~1 and select the query points sequentially in Algorithm~2.
To evaluate the performance of the proposed method, we compare it with the following baseline methods: (1) random sampling (RANDOM), (2) query by committee (COMMITTEE), which select query points that maximize the disagreement among different models \citep{settles2012active}, and (3) margin-based method (MARGIN) which choose query points that lie on the margin of the decision line \citep{schohn2000less}.

We replicate the experiment a hundred times on the banknote dataset. 
In each replication, the dataset is randomly divided into the training set and the testing set of equal sizes. 
We evaluate the classification model by its mean classification accuracy on the testing set.
The classification accuracy is defined as $(TP+FN)/n$, where $n$ denotes the size of the testing set, and $TP$ and $FN$ denote true positive and false negative, respectively.
We use the support vector machine, implemented by the R package \texttt{e1071} \citep{meyer2015e1071}), for classification in the active learning.
The RBF kernel with default parameters is applied. 
The size of query points ranges from 10 to 200.
For the committee method and the margin-based method, which require several initial labeled data points as input, ten data points are randomly selected and labeled.

The right panel of Fig.~\ref{fig4} shows the mean classification accuracy of different active learning methods versus different numbers of query points.
The vertical bars represent the standard errors.
These bars, however, are almost invisible due to extremely small values of standard errors.
We observe the proposed method consistently outperforms all the competitors.
We attribute such an observation to the fact that the proposed method selects a representative subsample in a sequential way, resulting in a more accurate prediction in active learning.

\section{Discussion}

In this paper, we proposed a novel model-free subsampling method, utilizing the space-filling design and optimal transport techniques.
The proposed algorithm is efficient and can be adaptive to the unknown probability density function.
Theoretically, we show the proposed subsample kernel density estimator converges to the true probability density function under mild conditions.
The order for the optimal smoothing parameter for the proposed kernel density estimator is also derived.
The superior performance of the proposed method over mainstream competitors was justified by various numerical experiments.

In this paper, we mainly focus on using the unit cube as the target distribution due to mathematically simplicity.
In practise, it is possible to consider standard Gaussian distribution instead.
Specifically, we could generate the random sample from the standard Gaussian distribution in Algorithm~\ref{alg:ALG2}, and use the Gaussian Sobol sequence instead of the space-filling design points in Algorithm~\ref{alg:ALG1}.
The other steps remain the same.
Empirical results show such a scheme may lead to slightly better performance.
The proposed method has the potential to be applied to many large-sample applications, including but not limited to nonparametric regression, kernel methods, and low-rank approximation of matrices.
This work may speed up these researches with theoretical guarantees.

\section*{Acknowledgment}
The authors thank the associate editor and two anonymous reviewers for provided helpful comments on earlier drafts of the manuscript.
The authors would like to acknowledge the support from National Key R\&D Program of China (No. 2021YFA1001300), National Natural Science Foundation of China Grant No.12101606, No.12001042, the U.S. National Science Foundation under grants DMS-1903226, DMS-1925066, the U.S. National Institute of Health under grant R01GM122080, and Beijing Institute of Technology research fund program for young scholars.

\section*{Conflict of Interest}
The authors report there are no competing interests to declare.

\clearpage
\bibliographystyle{agsm}
\bibliography{ref}

@inproceedings{rabin2011wasserstein,
  title={Wasserstein barycenter and its application to texture mixing},
  author={Rabin, Julien and Peyr{\'e}, Gabriel and Delon, Julie and Bernot, Marc},
  booktitle={International Conference on Scale Space and Variational Methods in Computer Vision},
  pages={435--446},
  year={2011},
  organization={Springer}
}

@article{yu2022subdata,
  title={Subdata selection algorithm for linear model discrimination},
  author={Yu, Jun and Wang, HaiYing},
  journal={Statistical Papers},
  pages={1--24},
  year={2022},
  publisher={Springer}
}

@incollection{meng2017effective,
  title={Effective statistical methods for big data analytics},
  author={Meng, Cheng and Wang, Ye and Zhang, Xinlian and Mandal, Abhyuday and Zhong, Wenxuan and Ma, Ping},
  booktitle={Handbook of research on applied cybernetics and systems science},
  pages={280--299},
  year={2017},
  publisher={IGI Global}
}

@article{zhang2022projection,
  title={Projection-based techniques for high-dimensional optimal transport problems},
  author={Zhang, Jingyi and Ma, Ping and Zhong, Wenxuan and Meng, Cheng},
  journal={Wiley Interdisciplinary Reviews: Computational Statistics},
  pages={e1587},
  year={Just accepted},
  publisher={Wiley Online Library}
}

@article{quiroz2018speeding,
  title={Speeding up MCMC by efficient data subsampling},
  author={Quiroz, Matias and Kohn, Robert and Villani, Mattias and Tran, Minh-Ngoc},
  journal={Journal of the American Statistical Association},
  year={2018},
  publisher={Taylor \& Francis}
}

@article{bardenet2017markov,
  title={On Markov chain {M}onte {C}arlo methods for tall data},
  author={Bardenet, R{\'e}mi and Doucet, Arnaud and Holmes, Chris},
  journal={The Journal of Machine Learning Research},
  volume={18},
  number={1},
  pages={1515--1557},
  year={2017},
  publisher={JMLR. org}
}

@article{chernozhukov2017monge,
  title={Monge--{K}antorovich depth, quantiles, ranks and signs},
  author={Chernozhukov, Victor and Galichon, Alfred and Hallin, Marc and Henry, Marc},
  journal={The Annals of Statistics},
  volume={45},
  number={1},
  pages={223--256},
  year={2017},
  publisher={Institute of Mathematical Statistics}
}

@InProceedings{nissim2007smooth,
  author       = {Nissim, Kobbi and Raskhodnikova, Sofya and Smith, Adam},
  booktitle    = {Proceedings of the thirty-ninth annual ACM symposium on Theory of computing},
  title        = {Smooth sensitivity and sampling in private data analysis},
  organization = {ACM},
  pages        = {75--84},
  year         = {2007},
}

@article{ai2021optimal,
  title={Optimal subsampling algorithms for big data regressions},
  author={Ai, Mingyao and Yu, Jun and Zhang, Huiming and Wang, HaiYing},
  journal={Statistica Sinica},
  volume={31},
  pages={1--24},
  year={2021}
}

@Article{mak2018support,
  author    = {Mak, Simon and Joseph, V Roshan},
  title     = {Support points},
  number    = {6A},
  pages     = {2562--2592},
  volume    = {46},
  journal   = {The Annals of Statistics},
  publisher = {Institute of Mathematical Statistics},
  year      = {2018},
}

@Article{tsai2015big,
  author    = {Tsai, Chun-Wei and Lai, Chin-Feng and Chao, Han-Chieh and Vasilakos, Athanasios V},
  title     = {Big data analytics: a survey},
  number    = {1},
  pages     = {21},
  volume    = {2},
  journal   = {Journal of Big data},
  publisher = {SpringerOpen},
  year      = {2015},
}

@Article{su2015optimal,
  author    = {Su, Zhengyu and Wang, Yalin and Shi, Rui and Zeng, Wei and Sun, Jian and Luo, Feng and Gu, Xianfeng},
  title     = {Optimal mass transport for shape matching and comparison},
  number    = {11},
  pages     = {2246--2259},
  volume    = {37},
  journal   = {IEEE transactions on pattern analysis and machine intelligence},
  publisher = {IEEE},
  year      = {2015},
}

@Article{li2020modern,
  author    = {Li, Tao and Meng, Cheng},
  title     = {Modern Subsampling Methods for Large-Scale Least Squares Regression},
  number    = {2},
  pages     = {1--28},
  volume    = {2},
  journal   = {International Journal of Cyber-Physical Systems (IJCPS)},
  publisher = {IGI Global},
  year      = {2020},
}

@Article{wang2018optimal,
  author    = {Wang, HaiYing and Zhu, Rong and Ma, Ping},
  title     = {Optimal subsampling for large sample logistic regression},
  number    = {522},
  pages     = {829--844},
  volume    = {113},
  journal   = {Journal of the American Statistical Association},
  publisher = {Taylor \& Francis},
  year      = {2018},
}

@Book{niederreiter1992random,
  author    = {Niederreiter, Harald},
  title     = {Random number generation and quasi-{M}onte {C}arlo methods},
  publisher = {SIAM},
  year      = {1992},
}

@Article{zhou2017machine,
  author    = {Zhou, Lina and Pan, Shimei and Wang, Jianwu and Vasilakos, Athanasios V},
  title     = {Machine learning on big data: Opportunities and challenges},
  pages     = {350--361},
  volume    = {237},
  journal   = {Neurocomputing},
  publisher = {Elsevier},
  year      = {2017},
}

@InProceedings{wald2006building,
  author       = {Wald, Ingo and Havran, Vlastimil},
  booktitle    = {Interactive Ray Tracing 2006, IEEE Symposium on},
  title        = {On building fast $k$d-trees for ray tracing, and on doing that in ${O}(N logN)$},
  organization = {IEEE},
  pages        = {61--69},
  year         = {2006},
}

@InProceedings{ma2014statistical,
  author    = {Ma, Ping and Mahoney, Michael and Yu, Bin},
  booktitle = {International Conference on Machine Learning},
  title     = {A statistical perspective on algorithmic leveraging},
  pages     = {91--99},
  year      = {2014},
}

@article{zhang2021review,
  title={A review on modern computational optimal transport methods with applications in biomedical research},
  author={Zhang, Jingyi and Zhong, Wenxuan and Ma, Ping},
  journal={Modern Statistical Methods for Health Research},
  pages={279--300},
  year={2021},
  publisher={Springer}
}

@Article{tsao2012subsampling,
  author    = {Tsao, Min and Ling, Xiao},
  title     = {Subsampling method for robust estimation of regression models},
  number    = {03},
  pages     = {281},
  volume    = {2},
  journal   = {Open Journal of Statistics},
  publisher = {Scientific Research Publishing},
  year      = {2012},
}

@InProceedings{zheng2013quality,
  author       = {Zheng, Yan and Jestes, Jeffrey and Phillips, Jeff M and Li, Feifei},
  booktitle    = {Proceedings of the 2013 ACM SIGMOD International Conference on Management of Data},
  title        = {Quality and efficiency for kernel density estimates in large data},
  organization = {ACM},
  pages        = {433--444},
  year         = {2013},
}

@Book{scott2015multivariate,
  author    = {Scott, David W},
  title     = {Multivariate Density Estimation: Theory, Practice, and Visualization},
  publisher = {John Wiley \& Sons},
  year      = {2015},
}

@Article{ma2015leveraging,
  author    = {Ma, Ping and Sun, Xiaoxiao},
  title     = {Leveraging for big data regression},
  number    = {1},
  pages     = {70--76},
  volume    = {7},
  journal   = {Wiley Interdisciplinary Reviews: Computational Statistics},
  publisher = {Wiley Online Library},
  year      = {2015},
}

@Article{chacon2010multivariate,
  author    = {Chac{\'o}n, Jos{\'e} E and Duong, T},
  title     = {Multivariate plug-in bandwidth selection with unconstrained pilot bandwidth matrices},
  number    = {2},
  pages     = {375--398},
  volume    = {19},
  journal   = {Test},
  publisher = {Springer},
  year      = {2010},
}

@Article{wang2017computationally,
  author  = {Wang, Yining and Yu, Adams Wei and Singh, Aarti},
  title   = {On Computationally Tractable Selection of Experiments in Measurement-Constrained Regression Models},
  number  = {143},
  pages   = {1--41},
  volume  = {18},
  journal = {Journal of Machine Learning Research},
  year    = {2017},
}

@Article{chen2014data,
  author    = {Chen, CL Philip and Zhang, Chun-Yang},
  title     = {Data-intensive applications, challenges, techniques and technologies: A survey on Big Data},
  pages     = {314--347},
  volume    = {275},
  journal   = {Information Sciences},
  publisher = {Elsevier},
  year      = {2014},
}

@InProceedings{rabin2014adaptive,
  author       = {Rabin, Julien and Ferradans, Sira and Papadakis, Nicolas},
  booktitle    = {2014 IEEE International Conference on Image Processing (ICIP)},
  title        = {Adaptive color transfer with relaxed optimal transport},
  organization = {IEEE},
  pages        = {4852--4856},
  year         = {2014},
}

@Article{pitie2007automated,
  author    = {Piti{\'e}, Fran{\c{c}}ois and Kokaram, Anil C and Dahyot, Rozenn},
  title     = {Automated colour grading using colour distribution transfer},
  number    = {1-2},
  pages     = {123--137},
  volume    = {107},
  journal   = {Computer Vision and Image Understanding},
  publisher = {Elsevier},
  year      = {2007},
}

@Article{meng2021smoothing,
  author    = {Meng, Cheng and Yu, Jun and Chen, Yongkai and Zhong, Wenxuan and Ma, Ping},
  title     = {Smoothing splines approximation using Hilbert curve basis selection},
  number    = {just-accepted},
  pages     = {1--26},
  journal   = {Journal of Computational and Graphical Statistics},
  publisher = {Taylor \& Francis},
  year      = {2021},
}

@Article{greengard1991fast,
  author    = {Greengard, Leslie and Strain, John},
  title     = {The fast {G}auss transform},
  number    = {1},
  pages     = {79--94},
  volume    = {12},
  journal   = {SIAM Journal on Scientific and Statistical Computing},
  publisher = {SIAM},
  year      = {1991},
}

@Article{trillos2014rate,
  author    = {Trillos, Nicol{\'a}s Garcia and Slep{$\Breve{c}$}ev, Dejan},
  title     = {On the rate of convergence of empirical measures in infinity-transportation distance},
  number    = {6},
  pages     = {1358--1383},
  volume    = {67},
  journal   = {Canadian Journal of Mathematics},
  publisher = {Cambridge University Press},
  year      = {2015},
}

@Book{villani2008optimal,
  author    = {Villani, C{\'e}dric},
  title     = {Optimal transport: old and new},
  publisher = {Springer Science \& Business Media},
  year      = {2008},
}

@Article{bentley1975multidimensional,
  author    = {Bentley, Jon Louis},
  title     = {Multidimensional binary search trees used for associative searching},
  number    = {9},
  pages     = {509--517},
  volume    = {18},
  journal   = {Communications of the ACM},
  publisher = {ACM},
  year      = {1975},
}

@InProceedings{zheng2017visualization,
  author       = {Zheng, Yan and Ou, Yi and Lex, Alexander and Phillips, Jeff M},
  booktitle    = {2017 IEEE Visualization in Data Science (VDS)},
  title        = {Visualization of Big Spatial Data using Coresets for Kernel Density Estimates},
  organization = {IEEE},
  pages        = {23--30},
  year         = {2017},
}

@Book{fang2005design,
  author    = {Fang, Kai-Tai and Li, Runze and Sudjianto, Agus},
  title     = {Design and modeling for computer experiments},
  publisher = {CRC Press},
  year      = {2005},
}

@InProceedings{phillips2013varepsilon,
  author       = {Phillips, Jeff M},
  booktitle    = {Proceedings of the twenty-fourth annual ACM-SIAM symposium on Discrete algorithms},
  title        = {$\varepsilon$-samples for kernels},
  organization = {SIAM},
  pages        = {1622--1632},
  year         = {2013},
}

@Article{owen2003quasi,
  author  = {Owen, Art B},
  title   = {Quasi-{M}onte {C}arlo sampling},
  pages   = {69--88},
  volume  = {1},
  journal = {{M}onte {C}arlo Ray Tracing: Siggraph},
  year    = {2003},
}

@Article{ferradans2014regularized,
  author    = {Ferradans, Sira and Papadakis, Nicolas and Peyr{\'e}, Gabriel and Aujol, Jean-Fran{\c{c}}ois},
  title     = {Regularized discrete optimal transport},
  number    = {3},
  pages     = {1853--1882},
  volume    = {7},
  journal   = {SIAM Journal on Imaging Sciences},
  publisher = {SIAM},
  year      = {2014},
}

@InProceedings{feder1988optimal,
  author       = {Feder, Tom{\'a}s and Greene, Daniel},
  booktitle    = {Proceedings of the twentieth annual ACM symposium on Theory of computing},
  title        = {Optimal algorithms for approximate clustering},
  organization = {ACM},
  pages        = {434--444},
  year         = {1988},
}

@Misc{meyer2015e1071,
  author = {Meyer, David and Dimitriadou, Evgenia and Hornik, Kurt and Weingessel, Andreas and Leisch, Friedrich},
  title  = {e1071: misc functions of the department of statistics, probability theory group (formerly: E1071), TU Wien. R package version 1.6-7},
  year   = {2015},
}

@Book{wu2011experiments,
  author    = {Wu, CF Jeff and Hamada, Michael S},
  title     = {Experiments: planning, analysis, and optimization},
  publisher = {John Wiley \& Sons},
  year      = {2011},
}

@Article{mosegaard1995monte,
  author    = {Mosegaard, Klaus and Tarantola, Albert},
  title     = {Monte {C}arlo sampling of solutions to inverse problems},
  number    = {B7},
  pages     = {12431--12447},
  volume    = {100},
  journal   = {Journal of Geophysical Research: Solid Earth},
  publisher = {Wiley Online Library},
  year      = {1995},
}

@Article{agueh2011barycenters,
  author    = {Agueh, Martial and Carlier, Guillaume},
  title     = {Barycenters in the {W}asserstein space},
  number    = {2},
  pages     = {904--924},
  volume    = {43},
  journal   = {SIAM Journal on Mathematical Analysis},
  publisher = {SIAM},
  year      = {2011},
}

@InProceedings{pitie2005n,
  author       = {Pitie, Francois and Kokaram, Anil C and Dahyot, Rozenn},
  booktitle    = {Computer Vision, 2005. ICCV 2005. Tenth IEEE International Conference on},
  title        = {N-dimensional probability density function transfer and its application to color transfer},
  organization = {IEEE},
  pages        = {1434--1439},
  volume       = {2},
  year         = {2005},
}

@InProceedings{li2016density,
  author    = {Li, Dangna and Yang, Kun and Wong, Wing Hung},
  booktitle = {Advances in Neural Information Processing Systems},
  title     = {Density estimation via discrepancy based adaptive sequential partition},
  pages     = {1091--1099},
  year      = {2016},
}

@InProceedings{schohn2000less,
  author       = {Schohn, Greg and Cohn, David},
  booktitle    = {ICML},
  title        = {Less is more: Active learning with support vector machines},
  organization = {Citeseer},
  pages        = {839--846},
  year         = {2000},
}

@InProceedings{krogh1995neural,
  author    = {Krogh, Anders and Vedelsby, Jesper},
  booktitle = {Advances in neural information processing systems},
  title     = {Neural network ensembles, cross validation, and active learning},
  pages     = {231--238},
  year      = {1995},
}

@Article{duong2003plug,
  author    = {Duong, Tarn and Hazelton, Martin},
  title     = {Plug-in bandwidth matrices for bivariate kernel density estimation},
  number    = {1},
  pages     = {17--30},
  volume    = {15},
  journal   = {Journal of Nonparametric Statistics},
  publisher = {Taylor \& Francis},
  year      = {2003},
}

@InProceedings{goodfellow2014generative,
  author    = {Goodfellow, Ian and Pouget-Abadie, Jean and Mirza, Mehdi and Xu, Bing and Warde-Farley, David and Ozair, Sherjil and Courville, Aaron and Bengio, Yoshua},
  booktitle = {Advances in neural information processing systems},
  title     = {Generative adversarial nets},
  pages     = {2672--2680},
  year      = {2014},
}

@Article{chung1949estimate,
  author    = {Chung, Kai-Lai},
  title     = {An estimate concerning the {K}olmogroff limit distribution},
  number    = {1},
  pages     = {36--50},
  volume    = {67},
  journal   = {Transactions of the American Mathematical Society},
  publisher = {JSTOR},
  year      = {1949},
}

@Article{meng2020sufficient,
  author  = {Meng, Cheng and Yu, Jun and Zhang, Jingyi and Ma, Ping and Zhong, Wenxuan},
  title   = {Sufficient dimension reduction for classification using principal optimal transport direction},
  volume  = {33},
  journal = {Advances in Neural Information Processing Systems},
  year    = {2020b},
}

@Book{fang1993number,
  author    = {Fang, Kai-Tai and Wang, Yuan},
  title     = {Number-theoretic methods in statistics},
  publisher = {CRC Press},
  year      = {1993},
}

@Book{leobacher2014introduction,
  author    = {Leobacher, Gunther and Pillichshammer, Friedrich},
  title     = {Introduction to quasi-{M}onte {C}arlo integration and applications},
  publisher = {Springer},
  year      = {2014},
}

@article{yu2020optimal,
  title={Optimal distributed subsampling for maximum quasi-likelihood estimators with massive data},
  author={Yu, Jun and Wang, HaiYing and Ai, Mingyao and Zhang, Huiming},
  journal={Journal of the American Statistical Association},
  pages={1--12},
  year={2020},
  publisher={Taylor \& Francis}
}

@Article{bertsekas1992auction,
  author    = {Bertsekas, Dimitri P},
  title     = {Auction algorithms for network flow problems: A tutorial introduction},
  number    = {1},
  pages     = {7--66},
  volume    = {1},
  journal   = {Computational optimization and applications},
  publisher = {Springer},
  year      = {1992},
}

@Article{kingma2013auto,
  author  = {Kingma, Diederik P and Welling, Max},
  title   = {Auto-encoding variational bayes},
  journal = {arXiv preprint arXiv:1312.6114},
  year    = {2013},
}

@Article{peyre2019computational,
  author    = {Peyr{\'e}, Gabriel and Cuturi, Marco and others},
  title     = {Computational optimal transport},
  number    = {5-6},
  pages     = {355--607},
  volume    = {11},
  journal   = {Foundations and Trends{\textregistered} in Machine Learning},
  publisher = {Now Publishers, Inc.},
  year      = {2019},
}

@InProceedings{li2012sampling,
  author       = {Li, Ninghui and Qardaji, Wahbeh and Su, Dong},
  booktitle    = {Proceedings of the 7th ACM Symposium on Information, Computer and Communications Security},
  title        = {On sampling, anonymization, and differential privacy or, k-anonymization meets differential privacy},
  organization = {ACM},
  pages        = {32--33},
  year         = {2012},
}

@Article{dasgupta2009sampling,
  author    = {Dasgupta, Anirban and Drineas, Petros and Harb, Boulos and Kumar, Ravi and Mahoney, Michael W},
  title     = {Sampling algorithms and coresets for $l_p$ regression},
  number    = {5},
  pages     = {2060--2078},
  volume    = {38},
  journal   = {SIAM Journal on Computing},
  publisher = {SIAM},
  year      = {2009},
}

@Article{gangbo1995optimal,
  author    = {Gangbo, Wilfrid and McCann, Robert J},
  title     = {Optimal maps in {M}onge's mass transport problem},
  number    = {12},
  pages     = {1653},
  volume    = {321},
  journal   = {Comptes Rendus de l'Academie des Sciences-Serie I-Mathematique},
  publisher = {Paris: Gauthier-Villars, c1984-c2001.},
  year      = {1995},
}

@Article{cohn1996active,
  author  = {Cohn, David A and Ghahramani, Zoubin and Jordan, Michael I},
  title   = {Active learning with statistical models},
  pages   = {129--145},
  volume  = {4},
  journal = {Journal of artificial intelligence research},
  year    = {1996},
}

@Article{courty2017optimal,
  author    = {Courty, Nicolas and Flamary, R{\'e}mi and Tuia, Devis and Rakotomamonjy, Alain},
  title     = {Optimal transport for domain adaptation},
  number    = {9},
  pages     = {1853--1865},
  volume    = {39},
  journal   = {IEEE transactions on pattern analysis and machine intelligence},
  publisher = {IEEE},
  year      = {2017},
}

@Article{shu2015privacy,
  author    = {Shu, Xiaokui and Yao, Danfeng and Bertino, Elisa},
  title     = {Privacy-preserving detection of sensitive data exposure},
  number    = {5},
  pages     = {1092--1103},
  volume    = {10},
  journal   = {IEEE transactions on information forensics and security},
  publisher = {IEEE},
  year      = {2015},
}

@InProceedings{meng2019large,
  author    = {Meng, Cheng and Ke, Yuan and Zhang, Jingyi and Zhang, Mengrui and Zhong, Wenxuan and Ma, Ping},
  booktitle = {Advances in Neural Information Processing Systems},
  title     = {Large-scale optimal transport map estimation using projection pursuit},
  pages     = {8116--8127},
  year      = {2019},
}

@Article{dick2013high,
  author    = {Dick, Josef and Kuo, Frances Y and Sloan, Ian H},
  title     = {High-dimensional integration: the quasi-{M}onte {C}arlo way},
  pages     = {133--288},
  volume    = {22},
  journal   = {Acta Numerica},
  publisher = {Cambridge University Press},
  year      = {2013},
}

@InProceedings{devroye1986sample,
  author       = {Devroye, Luc},
  booktitle    = {Proceedings of the 18th conference on Winter simulation},
  title        = {Sample-based non-uniform random variate generation},
  organization = {ACM},
  pages        = {260--265},
  year         = {1986},
}

@Article{ma2015statistical,
  author    = {Ma, Ping and Mahoney, Michael W and Yu, Bin},
  title     = {A statistical perspective on algorithmic leveraging},
  number    = {1},
  pages     = {861--911},
  volume    = {16},
  journal   = {The Journal of Machine Learning Research},
  publisher = {JMLR. org},
  year      = {2015},
}

@InProceedings{feldman2011scalable,
  author    = {Feldman, Dan and Faulkner, Matthew and Krause, Andreas},
  booktitle = {Advances in neural information processing systems},
  title     = {Scalable training of mixture models via coresets},
  pages     = {2142--2150},
  year      = {2011},
}

@Book{kuipers2012uniform,
  author    = {Kuipers, Lauwerens and Niederreiter, Harald},
  title     = {Uniform distribution of sequences},
  publisher = {Courier Corporation},
  year      = {2012},
}

@Article{lindsey2017optimal,
  author    = {Lindsey, Michael and Rubinstein, Yanir A},
  title     = {Optimal Transport via a {M}onge--{A}mpere Optimization Problem},
  number    = {4},
  pages     = {3073--3124},
  volume    = {49},
  journal   = {SIAM Journal on Mathematical Analysis},
  publisher = {SIAM},
  year      = {2017},
}

@Article{settles2012active,
  author    = {Settles, Burr},
  title     = {Active learning},
  number    = {1},
  pages     = {1--114},
  volume    = {6},
  journal   = {Synthesis Lectures on Artificial Intelligence and Machine Learning},
  publisher = {Morgan \& Claypool Publishers},
  year      = {2012},
}

@Article{drineas2011faster,
  author    = {Drineas, Petros and Mahoney, Michael W and Muthukrishnan, S and Sarl{\'o}s, Tam{\'a}s},
  title     = {Faster least squares approximation},
  number    = {2},
  pages     = {219--249},
  volume    = {117},
  journal   = {Numerische mathematik},
  publisher = {Springer},
  year      = {2011},
}

@Article{park2009simple,
  author    = {Park, Hae-Sang and Jun, Chi-Hyuck},
  title     = {A simple and fast algorithm for K-medoids clustering},
  number    = {2},
  pages     = {3336--3341},
  volume    = {36},
  journal   = {Expert systems with applications},
  publisher = {Elsevier},
  year      = {2009},
}

@inproceedings{cuturi2014fast,
  title={Fast computation of Wasserstein barycenters},
  author={Cuturi, Marco and Doucet, Arnaud},
  booktitle={International conference on machine learning},
  pages={685--693},
  year={2014},
  organization={PMLR}
}

@Article{meng2020more,
  author  = {Meng, Cheng and Zhang, Xinlian and Zhang, Jingyi and Zhong, Wenxuan and Ma, Ping},
  title   = {More efficient approximation of smoothing splines via space-filling basis selection},
  pages   = {723--735},
  volume  = {107},
  journal = {Biometrika},
  year    = {2020a},
}

@Article{ma2020asymptotic,
  author  = {Ma, Ping and Zhang, Xinlian and Xing, Xin and Ma, Jingyi and Mahoney, Michael W},
  title   = {Asymptotic Analysis of Sampling Estimators for Randomized Numerical Linear Algebra Algorithms},
  journal = {The 23nd International Conference on Artificial Intelligence and Statistics. 2020},
  year    = {2020},
}

@Article{drineas2006fast,
  author    = {Drineas, Petros and Kannan, Ravi and Mahoney, Michael W},
  title     = {Fast {M}onte {C}arlo algorithms for matrices {I}: Approximating matrix multiplication},
  number    = {1},
  pages     = {132--157},
  volume    = {36},
  journal   = {SIAM Journal on Computing},
  publisher = {SIAM},
  year      = {2006},
}

@InCollection{zhang2018statistical,
  author    = {Zhang, Xinlian and Xie, Rui and Ma, Ping},
  booktitle = {Handbook of Big Data Analytics},
  title     = {Statistical Leveraging Methods in Big Data},
  pages     = {51--74},
  publisher = {Springer},
  year      = {2018},
}

@Manual{transport2020,
  author = {Dominic Schuhmacher and Björn Bähre and Carsten Gottschlich and Valentin Hartmann and Florian Heinemann and Bernhard Schmitzer},
  title  = {{transport}: Computation of Optimal Transport Plans and {W}asserstein Distances},
  note   = {R package version 0.12-2},
  url    = {https://cran.r-project.org/package=transport},
  year   = {2020},
}

@Book{kaufman1987clustering,
  author    = {Kaufman, Leonard and Rousseeuw, Peter},
  title     = {Clustering by means of medoids},
  publisher = {North-Holland},
  year      = {1987},
}

@Book{lemieux2009book,
  author    = {Lemieux, Christiane},
  title     = {{M}onte {C}arlo and quasi-{M}onte {C}arlo sampling},
  publisher = {Springer, New York},
  year      = {2009},
}

@article{ai2021boptimal,
  title={Optimal subsampling for large-scale quantile regression},
  author={Ai, Mingyao and Wang, Fei and Yu, Jun and Zhang, Huiming},
  journal={Journal of Complexity},
  volume={62},
  pages={101512},
  year={2021b},
  publisher={Elsevier}
}

@InProceedings{lee2009fast,
  author    = {Lee, Dongryeol and Gray, Alexander G},
  booktitle = {Advances in Neural Information Processing Systems},
  title     = {Fast high-dimensional kernel summations using the {M}onte {C}arlo multipole method},
  pages     = {929--936},
  year      = {2009},
}

@InProceedings{xie2019online,
  author    = {Xie, Rui and Wang, Zengyan and Bai, Shuyang and Ma, Ping and Zhong, Wenxuan},
  booktitle = {The 22nd International Conference on Artificial Intelligence and Statistics},
  title     = {Online Decentralized Leverage Score Sampling for Streaming Multidimensional Time Series},
  pages     = {2301--2311},
  year      = {2019},
}

@Article{white1994continuous,
  author    = {White, Christopher A and Johnson, Benny G and Gill, Peter MW and Head-Gordon, Martin},
  title     = {The continuous fast multipole method},
  number    = {1-2},
  pages     = {8--16},
  volume    = {230},
  journal   = {Chemical physics letters},
  publisher = {Elsevier},
  year      = {1994},
}

@Article{bonneel2015sliced,
  author    = {Bonneel, Nicolas and Rabin, Julien and Peyr{\'e}, Gabriel and Pfister, Hanspeter},
  title     = {Sliced and {R}adon {W}asserstein barycenters of measures},
  number    = {1},
  pages     = {22--45},
  volume    = {51},
  journal   = {Journal of Mathematical Imaging and Vision},
  publisher = {Springer},
  year      = {2015},
}

@InProceedings{yang2003improved,
  author       = {Yang, Changjiang and Duraiswami, Ramani and Gumerov, Nail A and Davis, Larry},
  booktitle    = {null},
  title        = {Improved fast {G}auss transform and efficient kernel density estimation},
  organization = {IEEE},
  pages        = {464},
  year         = {2003},
}

\clearpage

\begin{center}
\section*{\LARGE\bf Supplemental Material}
\end{center}

\renewcommand\thesection{\Alph{section}}
\setcounter{section}{0}
\spacingset{1.5}
\section{Regularity conditions for the kernel function}

Throughout this paper, let $K(\cdot)$ be a non-negative real-valued integrable function that satisfies the following regularity conditions.

\begin{itemize}
    \item Condition 1. $K(-z)=K(z)$, for all $z\in \mathbb{R}$;
    \item Condition 2. $\int K(z)\mbox{d}z=1$;
    \item Condition 3. $\int z^2K(z)\mbox{d}z<\infty$; 
    \item Condition 4. $\int K^2(z)\mbox{d}z<\infty$;
    \item Condition 5. $\int (K'(z))^2\mbox{d}z < \infty$.
    \item Condition 6. $K(\cdot)$ is Lipschitz continuous, for all $z\in \mathbb{R}$, i.e., there exists a constant $L>0$ such that
    \[K(z_1)-K(z_2)\le L\|z_1-z_2\|_2;\]
\end{itemize}

One classical function that satisfies all these conditions is the Gaussian kernel function $K(z)=\exp\{-||z||^2/2\}/ (\int \exp\{-||z||^2/2\} \mbox{d}z$), where $||\cdot||$ denotes the Euclidean norm.
We refer to \cite{scott2015multivariate} for more discussion on different choices of kernel functions that satisfy these regularity conditions.

\section{Essential lemmas}
The following lemmas are essential to the proof. The proof of the first three lemmas can be found in \cite{kuipers2012uniform,gangbo1995optimal}, and \cite{lindsey2017optimal}, respectively.
The proof of Lemma~S4 can be found from Theorem~1 and some remarks on page 1362 in \cite{trillos2014rate}.
The proof of the last lemma is provided below.

\begin{lemma}\label{lem_1}
\textit{(Koksma-Hlawka inequality)} Denote $\S_r=\{s_1,...,s_r\}$ as a set of data points in $[0,1]^d$ and $f$ is a function on $[0,1]^d$ with bounded total variation $\mathcal{V}(f)$. The total variation is defined in the sense of Hardy and Krause \citep{owen2003quasi}. Then, 
\begin{eqnarray*}
\left|\int_{[0,1]^d}f(x)\mbox{d}x-\frac{1}{r}\sum_{i=1}^rf(s_i)\right|\leq D^*(\S_r)\mathcal{V}(f_{}).
\end{eqnarray*}
\end{lemma}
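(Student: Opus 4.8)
The plan is to prove the inequality by a multidimensional integration-by-parts (Abel summation) argument: I would express the integration error as a pairing between the signed discrepancy function of the point set and the mixed differentials of $f$, and then bound that pairing by the star discrepancy times the Hardy--Krause variation. To set up, I introduce the signed local discrepancy function
\[
\Delta(\x) = \frac{1}{r}\sum_{i=1}^r 1\{\bs_i\in[\0,\x)\} - \prod_{j=1}^d x_j,
\]
so that $D^*(\S_r)=\sup_{\x\in[0,1]^d}|\Delta(\x)|$ by Definition~1. Writing $\mu_r=\frac{1}{r}\sum_i\delta_{\bs_i}$ for the empirical measure and $\lambda$ for the Lebesgue measure on $[0,1]^d$, the left-hand side equals $|\int f\,\mathrm{d}(\mu_r-\lambda)|$, and $\Delta$ is precisely the (signed) cumulative distribution function of the error measure $\mu_r-\lambda$.

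First I would settle the one-dimensional case as a model. By Riemann--Stieltjes integration by parts,
\[
\int_0^1 f\,\mathrm{d}(\mu_r-\lambda) = \bigl[f(x)\Delta(x)\bigr]_0^1 - \int_0^1 \Delta(x)\,\mathrm{d}f(x).
\]
The boundary term vanishes because $\Delta(0)=0$ (empty interval) and $\Delta(1)=0$ (total masses agree), which leaves $|\int_0^1 f\,\mathrm{d}(\mu_r-\lambda)|\le \sup_x|\Delta(x)|\int_0^1|\mathrm{d}f(x)| = D^*(\S_r)\,\mathcal{V}(f)$. This isolates the two ingredients — the uniform bound on $\Delta$ and the total variation of $f$ — that must be recombined in higher dimensions.

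Second, for general $d$ I would iterate the integration by parts one coordinate at a time. At each coordinate $x_j$, a step either transfers the differential onto $f$ (contributing a factor of $\Delta$ in that coordinate) or evaluates at the upper endpoint $x_j=1$; collecting the $2^d$ outcomes yields a decomposition indexed by nonempty subsets $u\subseteq\{1,\ldots,d\}$,
\[
\int f\,\mathrm{d}(\mu_r-\lambda) = \sum_{\emptyset\neq u}(-1)^{|u|}\int_{[0,1]^u}\Delta(\x_u,\mathbf{1})\,\partial_u f(\x_u,\mathbf{1})\,\mathrm{d}\x_u,
\]
where $(\x_u,\mathbf{1})$ sets the coordinates outside $u$ equal to one and $\partial_u f$ is the mixed partial of $f$ in the coordinates of $u$ (read as a signed Stieltjes measure when $f$ is only of bounded variation). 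The decisive cancellations are that every contribution involving a lower endpoint $x_j=0$ drops out, since $\Delta$ vanishes whenever any coordinate is zero, and the fully-evaluated term carries $\Delta(\mathbf{1})=0$. Bounding $|\Delta(\x_u,\mathbf{1})|\le D^*(\S_r)$ uniformly in each summand and factoring it out gives
\[
\left|\int f\,\mathrm{d}(\mu_r-\lambda)\right| \le D^*(\S_r)\sum_{\emptyset\neq u}\int_{[0,1]^u}|\partial_u f(\x_u,\mathbf{1})|\,\mathrm{d}\x_u = D^*(\S_r)\,\mathcal{V}(f),
\]
because the sum over all faces of the Vitali variations is exactly the Hardy--Krause variation $\mathcal{V}(f)$.

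The main obstacle is making the multidimensional integration by parts rigorous when $f$ is merely of bounded variation in the sense of Hardy and Krause rather than smooth: the mixed derivatives must be interpreted as signed measures and the iterated Stieltjes integration justified, with all boundary faces accounted for. I would handle this by first establishing the decomposition identity for smooth $f$, where the iterated integration by parts and a Fubini argument are routine, and then passing to general $f$ by approximation — mollifying $f$ to obtain smooth functions whose Hardy--Krause variations converge to $\mathcal{V}(f)$ and whose integrals and point evaluations converge appropriately — so that the bound survives the limit.
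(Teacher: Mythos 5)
The paper itself offers no proof of this lemma---it is quoted with a pointer to \citet{kuipers2012uniform}---and your argument reconstructs exactly the classical proof found there: the Hlawka--Zaremba decomposition of the integration error over nonempty coordinate subsets $u$, the uniform bound $|\Delta(\x_u,\mathbf{1})|\le D^*(\S_r)$ in each summand, and the identification of the sum of face-wise Vitali variations anchored at $\mathbf{1}$ with the Hardy--Krause variation. That skeleton, including the observation that $\Delta$ vanishes whenever a coordinate is zero, is correct. One minor convention issue: with the paper's half-open boxes $[\0,\ba)$, the claims $\Delta(\mathbf{1})=0$ and the clean disappearance of the fully-evaluated term require that no $s_i$ has a coordinate equal to $1$; points on the upper boundary faces leave a residual term, which is standardly handled by working with closed boxes $[\0,\ba]$ (taking $\ba\uparrow\mathbf{1}$ shows this does not change $D^*$) or by assuming $\S_r\subset[0,1)^d$.

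The genuine gap is in your final limiting step. Mollification sees $f$ only up to Lebesgue-a.e.\ equivalence, so $f_\epsilon(s_i)\to f(s_i)$ is guaranteed only at continuity points of $f$, whereas a function of bounded Hardy--Krause variation may be discontinuous precisely at the sample points---and the left-hand side of the inequality depends on the actual values $f(s_i)$. Concretely, in $d=1$ take $r=1$, $s_1=1/2$, and $f=1\{x=1/2\}$: then $\mathcal{V}(f)=2$, $D^*(\S_1)=1/2$, and the claimed inequality is the nontrivial equality case $1\le \tfrac12\cdot 2$; but $f=0$ almost everywhere forces $f_\epsilon\equiv 0$ for every mollifier, so the smoothed inequality passes in the limit to the vacuous statement $0\le 0$ and never reaches the assertion for $f$ itself. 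Hence ``passing to general $f$ by approximation'' cannot deliver the lemma in the stated generality, no matter how the mollifier is chosen. The standard repair is to run the iterated integration by parts directly at the Riemann--Stieltjes level for BV-HK integrands, as in \citet{kuipers2012uniform} (or in the signed-measure formulation of Aistleitner and Dick), where the half-open box convention absorbs the discontinuities; alternatively, note that in this paper the lemma is only ever applied to the smooth function $g_z$ built from the Gaussian kernel and a $C^{1,\alpha}$ transport map, so your smooth-case identity together with the discrepancy bound already suffices for every downstream use in the proof of Theorem~1.
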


\begin{lemma}\label{lem_2}
\textit{(Existence and uniqueness of the optimal transport map)} Let the transportation cost be a strictly convex function, and $f_X,f_Y$ be the probability density functions with bounded support. The optimal transport map $\phi^*$ that minimizes the transportation cost is unique and is a one to one map.
\end{lemma}

\begin{lemma}\label{lem_3}
\textit{(Differentiable of the optimal transport map)} Let $\Omega$ and $\Lambda$ be bounded open sets in $\mathbb{R}^d$ with $\Lambda$ convex, and let $f_X$ and $f_Y$ be probability density functions on $\Omega$ and $\Lambda$, respectively, each bounded away from zero and infinity. Assume that $f_X$ and $f_Y$ are in $C^{0,\alpha}(\bar\Omega)$ and $C^{0,\alpha}(\bar\Lambda)$, repsectively. Then there exists a unique solution of the corresponding Monge problem for the quadratic cost, i.e.,
\begin{eqnarray*}
\min_{\{\phi:\Omega\to\Lambda:\phi_{\#}(p_X)=p_Y\}}\int_\Omega \|X-\phi(X)\|^2dp_X,
\end{eqnarray*}
and, moreover, $\phi$ is in $C^{1,\alpha}(\Omega),$ where $C^{k,\alpha}(\Omega)$ is consisted  by the functions on $\Omega$ having continuous derivatives up to order $k$ and such that the $k$th partial derivatives are Holder continuous with exponent $\alpha.$
\end{lemma}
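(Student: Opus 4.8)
The plan is to recognize this statement as the combination of two classical pillars of optimal transport theory: Brenier's theorem for existence and uniqueness, and Caffarelli's regularity theory for the $C^{1,\alpha}$ conclusion. First I would reduce the quadratic Monge problem to a concave maximization. Expanding the integrand, $\|X-\phi(X)\|^2 = \|X\|^2 - 2\langle X,\phi(X)\rangle + \|\phi(X)\|^2$, and using the marginal constraint $\phi_\#(p_X)=p_Y$ (so that $\int_\Omega \|\phi(X)\|^2\,dp_X = \int_\Lambda \|y\|^2\,dp_Y$ is fixed) together with the fact that $\int_\Omega \|X\|^2\,dp_X$ is also fixed, minimizing the transport cost is equivalent to maximizing the correlation functional $\int_\Omega \langle X,\phi(X)\rangle\,dp_X$. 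Because $f_X$ is absolutely continuous (indeed bounded away from zero and infinity), Brenier's polar factorization theorem applies: the optimal map exists, is $p_X$-almost everywhere unique, and takes the form $\phi=\nabla u$ for a convex potential $u:\Omega\to\mathbb{R}$. This settles the existence and uniqueness claims directly.

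Next I would translate the push-forward constraint into a Monge--Ampère equation. Formally changing variables under $y=\nabla u(x)$ in $\phi_\#(p_X)=p_Y$ yields
\[ \det D^2 u(x) = \frac{f_X(x)}{f_Y(\nabla u(x))} \]
in the Alexandrov (weak) sense. The hypothesis that both $f_X$ and $f_Y$ are bounded away from zero and infinity is precisely what pins the right-hand side between two positive constants, so $u$ solves a Monge--Ampère equation with bounded, strictly positive right-hand side.

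The final and hardest step is the regularity upgrade, for which I would invoke Caffarelli's theory. The convexity of the target domain $\Lambda$ is essential here, not merely a convenience: without it the Brenier potential $u$ can fail to be strictly convex and $\phi=\nabla u$ may be discontinuous. Given that $\Lambda$ is convex and the right-hand side is pinched between positive constants, Caffarelli's results first yield that $u$ is strictly convex and $C^{1,\beta}_{loc}$ for some $\beta>0$, equivalently that $\phi$ is continuous and locally H\"older. Then, since $f_X\in C^{0,\alpha}(\bar\Omega)$ and $f_Y\in C^{0,\alpha}(\bar\Lambda)$ make the right-hand side locally $C^{0,\alpha}$ (using the already-established continuity of $\nabla u$ to control the composition $f_Y\circ\nabla u$), the Schauder-type interior estimates for the Monge--Ampère operator bootstrap $u$ to $C^{2,\alpha}_{loc}(\Omega)$, whence $\phi=\nabla u\in C^{1,\alpha}(\Omega)$.

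I expect this regularity step to be the main obstacle. The first two steps rest only on elementary convex analysis and a change of variables, whereas the last one invokes Caffarelli's deep machinery---the geometry of sections of convex functions, affine-invariant normalizations, and Alexandrov maximum-principle estimates---which I would cite rather than reconstruct, exactly as the statement is attributed to the existing literature.
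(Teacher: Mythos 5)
Your proposal is correct and follows exactly the route behind the paper's citation: the paper does not prove this lemma itself but attributes it to the literature (Caffarelli's regularity theory as recorded in the cited reference), which is precisely the Brenier polar-factorization argument for existence/uniqueness followed by the Monge--Amp\`ere reduction and Caffarelli's strict-convexity/$C^{1,\beta}$ plus Schauder bootstrap that you outline. Your sketch correctly identifies the role of the convexity of $\Lambda$ and the two-sided density bounds, and appropriately defers the deep interior-regularity machinery to citation, just as the paper does.
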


\begin{lemma}\label{lem_5}\textit{(\cite{trillos2014rate})} Let $D\subseteq \mathbb{R}^d$ be a bounded, connected, open set with Lipschitz boundary. Let $\nu$ be a probability measure on $D$ with density $p: D\to(0,\infty)$ such that there exsits $C_1\ge 1$ for which $C_1^{-1}\le p(x)\le C_1 (\forall x\in D).$ Let $X_1,\ldots,X_n$ be i.i.d. sample from $\nu.$ Consider $\nu_n$ the empirical measure $\nu_n=n^{-1}\sum_{i=1}^n\delta_{X_i}.$ Then, for any fixed $\alpha>2,$ except on a set with probability $O(n^{-\alpha/2}),$
\begin{equation}
    W_\infty(\nu,\nu_n)\le C_2\begin{cases} \frac{\log(n)^{3/4}}{n^{1/2}} & d=2,\\
    \frac{{n}^{1/d}}{n^{1/d}} & d\ge 3,
    \end{cases}
\end{equation}
where $W_\infty$ is the $\infty$-transportation distance, $C_2$ is a constant depends on $\alpha,C_1,D$ only. 

Moreover, there exist some transportation map $T_n$ between $\nu$ and $\nu_n,$ such that 
\begin{equation}\label{eq:lems4}
    \|T_n-I_d\|_{L_\infty(D)}\le C_2\begin{cases} \frac{\log(n)^{3/4}}{n^{1/2}} & d=2,\\
    \frac{{\log(n)}^{1/d}}{n^{1/d}} & d\ge 3,
    \end{cases}
\end{equation}
holds,
where $\|\cdot\|_{L_\infty(D)}$ denotes the $L_\infty$ norm on $D$ and $I_d$ is the identity map.
\end{lemma}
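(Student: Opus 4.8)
The statement is a concentration bound for the empirical measure in the bottleneck ($\infty$-Wasserstein) distance, and I would follow the multiscale transport strategy going back to Ajtai--Koml\'os--Tusn\'ady and Leighton--Shor for the uniform case, adapting it to a density bounded away from $0$ and $\infty$. The plan is to construct, on an event of probability $1-O(n^{-\alpha/2})$, a single transport map $T_n$ with $(T_n)_\#\nu=\nu_n$ whose maximal displacement obeys the stated bound. Since any such map induces a coupling of $\nu$ and $\nu_n$, we have $W_\infty(\nu,\nu_n)\le\|T_n-I_d\|_{L_\infty(\nu)}\le\|T_n-I_d\|_{L_\infty(D)}$, so exhibiting one good map proves both displays at once. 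As a preliminary reduction, the hypothesis $C_1^{-1}\le p\le C_1$ gives $\nu(Q)\asymp|Q|$ uniformly over measurable cells $Q$ with constants depending only on $C_1$; thus the density enters only through the final constant $C_2$, and I may argue as though mass were comparably uniform on $D$.

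\emph{Core construction.} I would fix a nested dyadic partition of $D$ into cells at scales $\ell_k\asymp 2^{-k}$, for $k=0,1,\dots,K$, with the finest scale $\ell_K\asymp n^{-1/d}$ so that a typical finest cell carries $O(1)$ sample points. Writing $N(Q)=n\nu_n(Q)$ for the sample count in a cell, I would build $T_n$ recursively from coarse to fine: at each stage the mass $\nu$ assigns to a parent cell is redistributed among its $2^d$ children to match the empirical counts, the mass crossing between siblings being moved a distance $O(\ell_k)$. Telescoping over scales, the displacement of any unit of mass is bounded by $\sum_{k}\ell_k\cdot(\text{normalized local discrepancy at scale }k)$, and the $L_\infty$ displacement is governed by the worst cell. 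Running the recursion down to $\ell_K$ yields a partition of $D$ into $n$ regions of equal $\nu$-mass $1/n$, each matched to one atom $X_i$; sending each region to its atom defines a measurable map with $(T_n)_\#\nu=\nu_n$ whose sup-displacement is the accumulated transport distance.

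The analytic heart is the probabilistic control of the local discrepancies $|N(Q)-n\nu(Q)|$ across all cells and all scales. For a fixed cell, $N(Q)\sim\text{Binomial}(n,\nu(Q))$, so Bernstein's inequality gives exponential control of the deviation in terms of $n\nu(Q)\asymp n\ell_k^d$. I would then take a union bound over the $\asymp\ell_k^{-d}\asymp 2^{kd}$ cells at scale $k$ and over the $O(\log n)$ scales, calibrating the deviation thresholds so that the total failure probability is $O(n^{-\alpha/2})$ for the prescribed $\alpha>2$; this is where $C_2$ acquires its dependence on $\alpha$. Summing the resulting per-scale displacement bounds and optimizing produces the claimed rates (with the $d\ge3$ target taken as $(\log n)^{1/d}n^{-1/d}$, matching the map bound \eqref{eq:lems4}).

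The main obstacle, and the reason this is genuinely delicate rather than a routine union bound, is extracting the sharp logarithmic exponents and the dichotomy between $d=2$ and $d\ge3$. A naive single-scale argument that matches only at the finest scale, union-bounding over $\asymp n$ cells, produces an extra $\sqrt{\log n}$ factor and the suboptimal rate $\sqrt{\log n}\,n^{-1/d}$; the improvement to $(\log n)^{1/d}n^{-1/d}$ for $d\ge3$ and to $(\log n)^{3/4}n^{-1/2}$ for $d=2$ comes only from the telescoping multiscale structure, in which the union-bound cost $\log(2^{kd})\asymp k$ is balanced against the shrinking fluctuation $\sqrt{\ell_k^d/n}$ scale by scale. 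Dimension two is the critical case, where per-scale contributions are comparable across many scales and their accumulation is what produces the anomalous $3/4$ power, whereas for $d\ge3$ the sum is dominated by the finest scale; treating these regimes separately, and verifying that the recursive redistribution can be realized as a measurable map without inflating the displacement, is where the bulk of the technical work lies. For the fully rigorous version I would invoke Theorem~1 and the remarks on p.~1362 of \cite{trillos2014rate}, which carry out exactly this program on bounded Lipschitz domains.
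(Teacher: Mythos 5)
Your proposal takes essentially the same route as the paper: the paper offers no internal proof of this lemma, deferring---exactly as you do in your closing sentence---to Theorem~1 and the remarks on p.~1362 of \cite{trillos2014rate}, and your multiscale AKT/Leighton--Shor sketch is a fair outline of how that reference actually argues. One point in your favor: you silently correct the typo in the paper's first display, where the $d\ge 3$ rate is printed as $n^{1/d}/n^{1/d}$ but should read $\log(n)^{1/d}/n^{1/d}$, consistent with the map bound in Equation~(\ref{eq:lems4}) that the proof of Theorem~1 actually invokes.
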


\begin{lemma}\label{lem_4}
Let $C_1$ and $L$ be positive constants.
For Lipschitz continuous functions $f_j(z_j), j=1,\ldots,d$ with $\sup_{z_j}|f_j(z_j)|\le C_1$, we have
\begin{eqnarray*}
\left|\prod_{j=1}^df_j(z_j)- \prod_{j=1}^df_j(z'_j) \right|\le C_1^{d-1} \sum_{j=1}^d L|z_j-z'_j|.
\end{eqnarray*}
\end{lemma}

\begin{proof}[of Lemma S5]
\begin{align}
   &\left|\prod_{j=1}^df_j(z_i)- \prod_{j=1}^df_j(z'_j) \right |\nonumber\\
   \le& \left|f_1(z_1)\prod_{j=2}^df_j(z_j)- f_1(z_1)\prod_{j=2}^df_j(z'_j)\right |+\left|f_1(z_1)\prod_{j=2}^df_j(z'_j)-f_1(z'_1)\prod_{j=2}^df_j(z'_j)\right |\label{eq:s51}\\
   \le& C_1\left|\prod_{j=2}^df_j(z_j)- \prod_{j=2}^df_j(z'_j)\right |+ C_1^{d-1}\left|f_1(z_1)-f_1(z'_1)\right |\label{eq:s61}\\
   \le & C_1^{d-1}\left(\sum_{j=1}^d \left|f_j(z_j)- f_j(z'_j)\right |\right )\label{eq:s71}\\
   \le & C_1^{d-1} \sum_{j=1}^d L|z_j-z'_j|. \nonumber
\end{align}
The (\ref{eq:s71}) holds by using the same technique in (\ref{eq:s51}) and (\ref{eq:s61}), recursively.
\end{proof}

\section{Proof of Theorem 1}

For any fixed point $z \in \mathbb{R}^d$, the full sample estimator can be written as
\begin{eqnarray}\label{e1}
\pzx = \frac{1}{n}\sum_{i=1}^n \left\{\prod_{j=1}^d K_h\left(z_j-x_{ij}\right)\right\}.
\end{eqnarray}
Let $X$ be the random variable with probability distribution function $p$.
Lemma~\ref{lem_2} indicates there exists an optimal transport map $\phi^*$ such that $\phi^*(X)$ follows the uniform distribution on $[0,1]^d$, i.e., $U[0,1]^d$.
Lemma~\ref{lem_2} also indicates $\phi^*$ is a one-to-one map, and thus the map $(\phi^*)^{-1}$ is well-defined.
One thus can calculate the expectation of Equation~(\ref{e1}) using
\begin{eqnarray}\label{e2}
\mbox{E}(\pzx)= \int_{[0,1]^d}g_{z}(u)du,
\end{eqnarray}
where $g_{z}(u)= \prod_{j=1}^d K_h\left(z_j-((\phi^*)^{-1}(u))_j\right)$.

Recall that the optimal transport map $\widehat{\phi}$ is a one-to-one map from $\{x_i\}_{i=1}^n$ to a uniformly-distributed sample $\{u_i\}_{i=1}^n$, and thus its inverse map $\widehat{\phi}^{-1}$ is well-defined on $\{u_i\}_{i=1}^n$.
Following the notations in Algorithm~1, for $i=1,\ldots,r$, we can write the selected data point $x^*_i$ as $\widehat{\phi}^{-1}(u_i^*)$.
Consequently, the proposed subsample estimator can be written as
\begin{align}
\widehat{p}_{\rm est.}(z) &=\frac{1}{r}\sum_{i=1}^r \left\{\prod_{j=1}^d K_h\left(z_j-(\widehat\phi^{-1}(u_i^*))_j\right)\right\}\nonumber\\
&=\frac{1}{r}\sum_{i=1}^r g_{z,{\rm est.}}(u_i^*), \label{e33}
\end{align}
where $g_{z,{\rm est.}}(u)= \prod_{j=1}^d K_h\left(z_j-(\widehat\phi^{-1}(u))_j\right),$ for $u\in \{u_i\}_{i=1}^n.$

Let
\begin{align}
\pzxr 
&=\frac{1}{r}\sum_{i=1}^r g_{z}(u_i^*)\label{e3}.
\end{align}

The MSE of the proposed estimator, i.e., $\mbox{MSE}(\widehat{p}_{\rm est.}(z))$, can be bounded as follows,
\begin{align}
\mbox{MSE}(\widehat{p}_{\rm est.}(z)) & = \mbox{E}\Big(\widehat{p}_{\rm est.}(z)-p(z)\Big)^2\nonumber\\
& = \mbox{E}\Big(\widehat{p}_{\rm est.}(z)-\pzxr+\pzxr-p(z)\Big)^2\nonumber\\
&\le 2\mbox{E}\Big(\widehat{p}_{\rm est.}(z)-\pzxr\Big)^2+2\mbox{E}\Big(\pzxr-p(z)\Big)^2\nonumber\\
& = 2\mbox{E}\Big(\widehat{p}_{\rm est.}(z)-\pzxr\Big)^2+\mbox{E}\Big(\pzxr-\mbox{E}(\pzx)+\mbox{E}(\pzx)-p(z)\Big)^2\nonumber\\
& \leq 2\mbox{E}\Big|\widehat{p}_{\rm est.}(z)-\pzxr\Big|^2+2\mbox{E}\Big|\pzxr-\mbox{E}(\pzx) \Big|^2+2\mbox{E}\Big|\mbox{E}(\pzx)-p(z) \Big|^2.\label{eq:s92}
\end{align}

It is known that under Conditions~(a) and (b), 
\begin{eqnarray} \label{e9}
\mbox{E}\Big|\mbox{E}(\pzx)-p(z) \Big|^2=O(h^4),
\end{eqnarray}
see \citet{scott2015multivariate} for more details.
In the following, we derive the upper bound for the first and the second term of the right-hand-side of Inequality~(\ref{eq:s92}), respectively.

We first show that, under Conditions 4 and 6, we have \begin{eqnarray}\label{s12}
K_h\left(z_j-((\phi^*)^{-1}(u))_j\right)\le C_1 \quad\mbox{   and   }\quad K_h\left(z_j-(\widehat\phi^{-1}(u))_j\right)\le C_1
\end{eqnarray}
for some positive constant $C_1$, $j=1,\ldots, d$
This is because, if there exists a $z_j$ and $u$ such that $K_h\left(z_j-((\phi^*)^{-1}(u))_j\right)= \infty$; then Condition~6 indicates one can find a non empty set $\mathcal{S}$, such that $K_h\left(z_j-((\phi^*)^{-1}(u^+))_j\right)= \infty$ for any $z_j\in \mathcal{S}$. 
Consequently, we have $\int_\mathcal{S} K_h^2\left(z_j-((\phi^*)^{-1}(u^+))_j\right)\mbox{d}z_j= \infty,$ which leads to a contradiction.

Using Inequalities~(\ref{s12}), Condition 6, and Lemma~\ref{lem_4}, we have
\begin{align}
 |g_z(u)-g_{z,{\rm est.}}(u)|&=\left|\prod_{j=1}^d K_h\left(z_j-((\phi^*)^{-1}(u))_j\right)-\prod_{j=1}^d K_h\left(z_j-(\widehat\phi^{-1}(u))_j\right) \right |\nonumber\\
    &\le C_1^{d-1}\sum_{j=1}^d L\|((\phi^*)^{-1}(u))_j-(\widehat\phi^{-1}(u))_j\|_2\nonumber\\
    &=C_1^{d-1}L\|((\phi^*)^{-1}(u))-(\widehat\phi^{-1}(u))\|_1,   \label{eq:s4}
\end{align}
where $\|\cdot\|_2$ and $\|\cdot\|_1$ are the $l_2$ norm and $l_1$ norm, respectively.

Combining Equations~(\ref{e33}),(\ref{e3}) and (\ref{eq:s4}), for $d\ge 3$, we have
\begin{align}
    |\pzxr-\widehat{p}_{\rm est.}(z)|&\le\frac{1}{r}\sum_{i=1}^r|g_z(u_i^*)-g_{z,{\rm est.}}(u_i^*)|\nonumber\\
    &\le C_1^{d-1}L\sup_{u\in\{u_i\}_{i=1}^n}\|((\phi^*)^{-1}(u))-((\widehat\phi)^{-1}(u))\|_1\nonumber\\
    &\le C_1^{d-1}L{d}\sup_{u\in\{u_i\}_{i=1}^n}\|((\phi^*)^{-1}(u))-((\widehat\phi)^{-1}(u))\|_\infty\nonumber\\
    &=O_p\left(\frac{\log(n)^{1/d}}{n^{1/d}}\right) \label{eq:s8}\\
    &=O_p\left(\frac{\log(n)^{1/d}}{\log(r)^{1/d}}\right)O_p\left(\frac{\log(r)^{1/d}}{r}\right)O_p\left(\frac{r}{n^{1/d}}\right)\nonumber\\
    &=O_p\left(\frac{\log(r)^{1/d}}{r}\right),\label{eq:s9}
\end{align}
where $\|\cdot\|_\infty$ is the $l_\infty$.
Here, Equation~(\ref{eq:s8}) comes from (\ref{eq:lems4}) in Lemma \ref{lem_5}, and Equation~(\ref{eq:s9}) comes from the assumption that $r=O(n^{1/d})$.
For the case when $d=2$, according to Lemma~\ref{lem_5}, we have
\begin{align}
    |\pzxr-\widehat{p}_{\rm est.}(z)|
    &=O_p\left(\frac{\log(n)^{1/d+1/4}}{n^{1/d}}\right) =O_p\left(\frac{\log(r)^{1/d+1/4}}{r}\right).\label{eq:s15}
\end{align}
Combining Equations~(\ref{eq:s9}) and (\ref{eq:s15}), for $d\ge 2$, we have
\begin{equation}\label{eq:s16}
    \mbox{E}|\pzxr-\widehat{p}_{\rm est.}(z)|
     =O\left(\frac{\log(r)^{1/d+1/4}}{r}\right).
\end{equation}

Next, we consider the upper bound for $(\pzxr-\mbox{E}\left(\pzx\right))^2$.
Combining the results in Equations~(\ref{e2}), (\ref{e3}) and Lemma~\ref{lem_1}, we have,
\begin{align}
\left|\pzxr-\mbox{E}\left(\pzx\right) \right|
& = \left|\frac{1}{r}\sum_{i=1}^r g_{z}(u_i^*)-\int_{[0,1]^d}g_{z}(u)du\right| \leq D^*(\U_r^*)\mathcal{V}(g_{z})\label{e0}.
\end{align}

Following the definition of the total variation, we have
\[
\mathcal{V}(g_{z}) = \int_{[0,1]^d}\|\bigtriangledown g_{z}(u)\|du,
\]
where $\|\cdot\|$ is the $l_2$ norm, and $\bigtriangledown g_{z}(u) = \left(\frac{\partial g_{z}(u)}{\partial u_1},..., \frac{\partial g_{z}(u)}{\partial u_d}\right)^T$. 
To simplify the expression of $g_z(u)$, we let
\[
\mathcal{K}(x) = \prod_{j=1}^d K(x_j), \quad x \in \mathbb{R}^d.
\]
One thus has $g_z(u)=\frac{1}{h^d}\mathcal{K}\left(\frac{z-(\phi^*)^{-1}(u)}{h}\right)$. Let $\omega = \frac{z-(\phi^*)^{-1}(u)}{h}$, we have
\[
\bigtriangledown g_{z}(u) = \frac{1}{h^d}J_{\omega \to u}^T\bigtriangledown \mathcal{K}(\omega),
\]
where 
\[
J_{\omega \to u}=
\begin{bmatrix}
    \frac{\partial \omega_1}{\partial u_1} & \dots  & \frac{\partial \omega_1}{\partial u_d} \\
    \vdots & \ddots & \vdots \\
    \frac{\partial \omega_d}{\partial u_1} & \dots  & \frac{\partial \omega_d}{\partial u_d}
\end{bmatrix}.
\]
Similarly, we define
\[
J_{u \to \omega}=
\begin{bmatrix}
    \frac{\partial u_1}{\partial \omega_1} & \dots  & \frac{\partial u_1}{\partial \omega_d} \\
    \vdots & \ddots & \vdots \\
    \frac{\partial u_d}{\partial \omega_1} & \dots  & \frac{\partial u_d}{\partial \omega_d}
\end{bmatrix},
J_{\phi^*}=
\begin{bmatrix}
    \frac{\partial (\phi^*(x))_1}{\partial x_1} & \dots  & \frac{\partial (\phi^*(x))_1}{\partial x_d} \\
    \vdots & \ddots & \vdots \\
    \frac{\partial (\phi^*(x))_d}{\partial x_1} & \dots  & \frac{\partial (\phi^*(x))_d}{\partial x_d}
\end{bmatrix}, 
\]
and
\[
J_{(\phi^*)^{-1}}=
\begin{bmatrix}
    \frac{\partial ((\phi^*)^{-1}(u))_1}{\partial u_1} & \dots  & \frac{\partial ((\phi^*)^{-1}(u))_1}{\partial u_d} \\
    \vdots & \ddots & \vdots \\
    \frac{\partial ((\phi^*)^{-1}(u))_d}{\partial u_1} & \dots  & \frac{\partial ((\phi^*)^{-1}(u))_d}{\partial u_d}
\end{bmatrix}.
\]

Notice that $J_{\omega \to u} = -\frac{1}{h}J_{(\phi^*)^{-1}}$, one thus has
\[
\bigtriangledown g_{z}(u) = \frac{1}{h^{d+1}}J_{(\phi^*)^{-1}}^T\bigtriangledown \mathcal{K}(\omega).
\]
Using the Jensen's inequality, we have
\begin{align}
\mathcal{V}^2(g_{z}) & \leq\int_{[0,1]^d}||\bigtriangledown g_{z}(u)||^2du\nonumber\\
& = \int_{[0,1]^d}(\bigtriangledown g_{z}(u))^T\bigtriangledown g_{z}(u)du\nonumber\\
& = \frac{1}{h^{2d+2}}\int_{[0,1]^d}(\bigtriangledown \mathcal{K}(\omega))^TJ_{(\phi^*)^{-1}}J_{(\phi^*)^{-1}}^T\bigtriangledown \mathcal{K}(\omega)du\nonumber\\
& = \frac{1}{h^{2d+2}}\int_{\Omega}(\bigtriangledown \mathcal{K}(\omega))^TJ_{(\phi^*)^{-1}}J_{(\phi^*)^{-1}}^T\bigtriangledown \mathcal{K}(\omega)|\det(J_{u \to \omega})|d\omega\nonumber\\
& = \frac{1}{h^{d+2}}\int_{\Omega}(\bigtriangledown \mathcal{K}(\omega))^TJ_{(\phi^*)^{-1}}J_{(\phi^*)^{-1}}^T\bigtriangledown \mathcal{K}(\omega)|\det(J_{\phi^*})|d\omega\label{e4},
\end{align}
where the fact that $u = \phi^*(z-h\omega)$, $J_{u \to \omega} = -hJ_{\phi^*}$, and $|\det(J_{u \to \omega})| = h^d|\det(J_{\phi^*})|$ are used in the last equation.

Notice that 
\begin{align}
(\bigtriangledown \mathcal{K}(\omega))^TJ_{(\phi^*)^{-1}}J_{(\phi^*)^{-1}}^T\bigtriangledown \mathcal{K}(\omega) & = \mathrm{tr}\left((\bigtriangledown \mathcal{K}(\omega))^TJ_{(\phi^*)^{-1}}J_{(\phi^*)^{-1}}^T\bigtriangledown \mathcal{K}(\omega)\right)\nonumber\\
& = \mathrm{tr}\left(\bigtriangledown \mathcal{K}(\omega)(\bigtriangledown \mathcal{K}(\omega))^TJ_{(\phi^*)^{-1}}J_{(\phi^*)^{-1}}^T\right)\nonumber\\
& \leq \mathrm{tr}\left(\bigtriangledown \mathcal{K}(\omega)(\bigtriangledown \mathcal{K}(\omega))^T\right)\mathrm{tr}\left(J_{(\phi^*)^{-1}}J_{(\phi^*)^{-1}}^T\right)\label{e5}.
\end{align}

For the first term in the right-hand-side of Inequality~(\ref{e5}), i.e., $\mathrm{tr}\left(\bigtriangledown \mathcal{K}(\omega)(\bigtriangledown \mathcal{K}(\omega))^T\right)$, we have
\begin{align}
\mathrm{tr}\left(\bigtriangledown \mathcal{K}(\omega)(\bigtriangledown \mathcal{K}(\omega))^T\right) & = \mathrm{tr}\left((\bigtriangledown \mathcal{K}(\omega))^T\bigtriangledown \mathcal{K}(\omega)\right)\nonumber\\
& = (\bigtriangledown \mathcal{K}(\omega))^T\bigtriangledown \mathcal{K}(\omega)\nonumber\\
& = \sum_{k=1}^d\left(\left\{\prod_{j\neq k}K^2(\omega_j)\right\}\left(K'(\omega_k)\right)^2\right)\label{e66}.
\end{align}

For the second term in the right-hand-side of Inequality~(\ref{e5}), we have
\begin{eqnarray}\label{e6}
\mathrm{tr}\left(J_{(\phi^*)^{-1}}J_{(\phi^*)^{-1}}^T\right)\leq C,
\end{eqnarray}
for a positive constant $C$.
This is because $(\phi^*)^{-1}$ is an optimal transport map that defined on a bounded domain $[0,1]^d$.
Furthermore, according to Lemma S3, the derivative of $(\phi^*)^{-1}$ is continuous.
Consequently, all the entries in $J_{(\phi^*)^{-1}}$ are finite, and thus Inequality~(\ref{e6}) can be satisfied.
Plugging Equation~(\ref{e66}) and Inequality~(\ref{e5}) back into Equation~(\ref{e4}), we have
\begin{align}
\mathcal{V}^2(g_{z}) & \leq \frac{1}{h^{d+2}}C\int\cdots\int\sum_{k=1}^d\left(\left\{\prod_{j\neq k}K^2(\omega_j)\right\}\left(K'(\omega_k)\right)^2\right)d\omega_1\cdots d\omega_d\nonumber\\
& = \frac{1}{h^{d+2}}C\sum_{k=1}^d\left\{\prod_{j\neq k}\int_{\Omega_j}K^2(\omega_j)d\omega_j\int_{\Omega_k}\left(K'(\omega_k)\right)^2d\omega_k\right\}\nonumber\\
& = O\left(\frac{1}{h^{d+2}}\right)\label{e7}.
\end{align}

Combining Inequalities~(\ref{e7}) and ~(\ref{e0}), we have 
\begin{eqnarray}\label{e8}
\mbox{E}\Big(\pzxr-\mbox{E}\left(\pzx\right) \Big)^2 \leq \Big(D^*(\U_r^*)\Big)^2\mathcal{V}^2(g_{z}) = O\left(\frac{1}{r^{2(1-\delta)} h^{d+2}}\right).
\end{eqnarray}


Plugging (\ref{eq:s16}), (\ref{e8}) and (\ref{e9}) into (\ref{eq:s92}) yields  
\begin{eqnarray*}
\mbox{MSE}(\pzxr) &=&  O\left(\frac{\log(r)^{2/d+1/2}}{r^2}\right)+ O{\left(\frac{1}{r^{2(1-\delta)} h^{d+2}}\right)} + O(h^4)\\
&=&O{\left(\frac{1}{r^{2(1-\delta)} h^{d+2}}\right)} + O(h^4).
\end{eqnarray*}
Consequently, when $h=O(r^{-\frac{2(1-\delta)}{6+d}})$, we have
\begin{equation*}
\mbox{MSE}(\pzxr) = O(r^{-\frac{8(1-\delta)}{6+d}}).
\end{equation*}


\end{document}



\def\spacingset#1{\renewcommand{\baselinestretch}%
{#1}\small\normalsize} \spacingset{1}

\newcommand{\red}[1]{\textcolor{red}{#1}}
\newcommand{\blue}[1]{\textcolor{blue}{#1}}

\addtolength{\textheight}{.5in}%

\if1\blind
{
  \title{\bf Supplementary materials for ``An optimal transport approach for selecting a representative subsample with application in efficient kernel density estimation"}
  \date{}
  \author{Jingyi Zhang \\
    Center for Statistical Science, Tsinghua University\\
    \\
    Cheng Meng \footnote{Joint first author} \\
    Center for Applied Statistics, \\ Institute of Statistics and Big Data, Renmin University of China\\
    \\
    Jun Yu \\
    School of Mathematics and Statistics, Beijing Institute of Technology
    \\
    \\
    Mengrui Zhang, Wenxuan Zhong, and Ping Ma\footnote{Corresponding author}\\
    Department of Statistics, University of Georgia.
    }
  \maketitle
} \fi

\spacingset{1.5}

\begin{description}

\item[Title:] Proofs of theoretical results and related materials.

\item[Regularity conditions for the kernel function:] We present some essential regularity conditions. 

\item[Essential lemmas:] We introduce some essential lemmas and their proofs.

\item[Proof of Theorem 1:] A proof of Theorem~1.

\item[CPU time comparison:] We provide simulation results to compare the CPU time of the proposed method with the competitors.

\end{description}

\section{Regularity conditions for the kernel function}

Throughout this paper, let $K(\cdot)$ be a non-negative real-valued integrable function that satisfies the following regularity conditions.

\begin{itemize}
    \item Condition 1. $K(-z)=K(z)$, for all $z\in \mathbb{R}$;
    \item Condition 2. $\int K(z)\mbox{d}z=1$;
    \item Condition 3. $\int z^2K(z)\mbox{d}z<\infty$; 
    \item Condition 4. $\int K^2(z)\mbox{d}z<\infty$;
    \item Condition 5. $\int (K'(z))^2\mbox{d}z < \infty$.
    \item Condition 6. $K(\cdot)$ is Lipschitz continuous, for all $z\in \mathbb{R}$, i.e., there exists a constant $L>0$ such that
    \[K(z_1)-K(z_2)\le L|z_1-z_2|;\]
\end{itemize}

One classical function that satisfies all these conditions is the Gaussian kernel function $K(z)=\exp\{-z^2/2\}/ (\int \exp\{-z^2/2\} \mbox{d}z$).
We refer to \cite{scott2015multivariate} for more discussion on different choices of kernel functions that satisfy these regularity conditions.

\section{Essential lemmas}
The following lemmas are essential to the proof. The proof of the first three lemmas can be found in \cite{kuipers2012uniform,gangbo1995optimal}, and \cite{lindsey2017optimal}, respectively.
The proof of Lemma~4 and 5 can be found in \cite{trillos2014rate}.
The proof of the last lemma is provided below.

\begin{lemma}\label{lem_1}
\textit{(Koksma-Hlawka inequality)} Denote $\S_r=\{\bs_1,...,\bs_r\}$ as a set of data points in $[0,1]^d$ and $f$ is a function on $[0,1]^d$ with bounded total variation $\mathcal{V}(f)$. The total variation is defined in the sense of Hardy and Krause \citep{owen2003quasi}. Then, 
\begin{eqnarray*}
\left|\int_{[0,1]^d}f(\x)\mbox{d}\x-\frac{1}{r}\sum_{i=1}^rf(\bs_i)\right|\leq D^*(\S_r)\mathcal{V}(f_{}).
\end{eqnarray*}
\end{lemma}

\begin{lemma}\label{lem_2}
\textit{(Existence and uniqueness of the optimal transport map)} Let the transportation cost be a strictly convex function, and $f_X,f_Y$ be the probability density functions with bounded support. The optimal transport map $\phi^*$ that minimizes the transportation cost is unique and is a one to one map.
\end{lemma}

\begin{lemma}\label{lem_3}
\textit{(Differentiability of the optimal transport map)} Let $\Omega$ and $\Lambda$ be bounded open sets in $\mathbb{R}^d$ with $\Lambda$ convex, and let $f_X$ and $f_Y$ be probability density functions on $\Omega$ and $\Lambda$, respectively, each bounded away from zero and infinity. Assume that $f_X$ and $f_Y$ are in $C^{0,\alpha}(\bar\Omega)$ and $C^{0,\alpha}(\bar\Lambda)$, respectively. Then there exists a unique solution of the corresponding Monge problem for the quadratic cost, i.e.,
\begin{eqnarray*}
\min_{\{\phi:\Omega\to\Lambda:\phi_{\#}(p_X)=p_Y\}}\int_\Omega \|X-\phi(X)\|^2dp_X,
\end{eqnarray*}
and, moreover, $\phi$ is in $C^{1,\alpha}(\Omega),$ where $C^{k,\alpha}(\Omega)$ is consisted  by the functions on $\Omega$ having continuous derivatives up to order $k$ and such that the $k$th partial derivatives are H$\ddot{o}$lder continuous with exponent $\alpha.$
\end{lemma}

\begin{lemma}\label{lem_5}\textit{(\cite{trillos2014rate})} Let $D\subseteq \mathbb{R}^d$ be a bounded, connected, open set with Lipschitz boundary. Let $\nu$ be a probability measure on $D$ with density $p: D\to(0,\infty)$ such that there exists $C_1\ge 1$ for which $C_1^{-1}\le p(\x)\le C_1 (\forall \x\in D).$ Let $\{\x_i\}_{i=1}^n$ be i.i.d. sample from $\nu.$ Consider $\nu_n$ the empirical measure $\nu_n=n^{-1}\sum_{i=1}^n\delta_{\x_i}.$ Then, for any fixed $\alpha>2,$ except on a set with probability $O(n^{-\alpha/2}),$
\begin{equation}
    W_\infty(\nu,\nu_n)\le C_2\begin{cases} \frac{\log(n)^{3/4}}{n^{1/2}} & d=2,\\
    \frac{{n}^{1/d}}{n^{1/d}} & d\ge 3,
    \end{cases}
\end{equation}
where $W_\infty$ is the $\infty$-transportation distance, $C_2$ is a constant depends on $\alpha,C_1,D$ only. 

Moreover, there exist some transportation map $T_n$ between $\nu$ and $\nu_n,$ such that 
\begin{equation}\label{eq:lems4}
    \|T_n-I_d\|_{L_\infty(D)}\le C_2\begin{cases} \frac{\log(n)^{3/4}}{n^{1/2}} & d=2,\\
    \frac{{\log(n)}^{1/d}}{n^{1/d}} & d\ge 3,
    \end{cases}
\end{equation}
holds,
where $\|\cdot\|_{L_\infty(D)}$ denotes the $L_\infty$ norm on $D$ and $I_d$ is the identity map.
\end{lemma}

\begin{lemma}\label{lem_4}
Let $C_1$ and $L$ be positive constants.
For Lipschitz continuous functions $f_j(z_j), j=1,\ldots,d$ with $\sup_{z_j}|f_j(z_j)|\le C_1$, we have
\begin{eqnarray*}
\left|\prod_{j=1}^df_j(z_j)- \prod_{j=1}^df_j(z'_j) \right|\le C_1^{d-1} \sum_{j=1}^d L|z_j-z'_j|.
\end{eqnarray*}
\end{lemma}

\begin{proof}[of Lemma 6]
\begin{align}
   &\left|\prod_{j=1}^df_j(z_i)- \prod_{j=1}^df_j(z'_j) \right |\nonumber\\
   \le& \left|f_1(z_1)\prod_{j=2}^df_j(z_j)- f_1(z_1)\prod_{j=2}^df_j(z'_j)\right |+\left|f_1(z_1)\prod_{j=2}^df_j(z'_j)-f_1(z'_1)\prod_{j=2}^df_j(z'_j)\right |\label{eq:s51}\\
   \le& C_1\left|\prod_{j=2}^df_j(z_j)- \prod_{j=2}^df_j(z'_j)\right |+ C_1^{d-1}\left|f_1(z_1)-f_1(z'_1)\right |\label{eq:s61}\\
   \le & C_1^{d-1}\left(\sum_{j=1}^d \left|f_j(z_j)- f_j(z'_j)\right |\right )\label{eq:s71}\\
   \le & C_1^{d-1} \sum_{j=1}^d L|z_j-z'_j|. \nonumber
\end{align}
The (\ref{eq:s71}) holds by using the same technique in (\ref{eq:s51}) and (\ref{eq:s61}), recursively.
\end{proof}

\section{Proof of Theorem 1}

For any fixed point $z \in \mathbb{R}^d$, the full sample estimator can be written as
\begin{eqnarray}\label{e1}
\pzx = \frac{1}{n}\sum_{i=1}^n \left\{\prod_{j=1}^d K_h\left(z_j-x_{ij}\right)\right\}.
\end{eqnarray}
Let $X$ be the random variable with probability distribution function $p$.
Lemma~\ref{lem_2} indicates there exists an optimal transport map $\phi^*$ such that $\phi^*(X)$ follows the uniform distribution on $[0,1]^d$, i.e., $U[0,1]^d$.
Lemma~\ref{lem_2} also indicates $\phi^*$ is a one-to-one map, and thus the map $(\phi^*)^{-1}$ is well-defined.
One thus can calculate the expectation of Equation~(\ref{e1}) using
\begin{eqnarray}\label{e2}
\mbox{E}(\pzx)= \int_{[0,1]^d}g_{z}(\bu)\mbox{d}\bu,
\end{eqnarray}
where $g_{z}(\bu)= \prod_{j=1}^d K_h\left(z_j-((\phi^*)^{-1}(\bu))_j\right)$.

Recall that the optimal transport map $\widehat{\phi}$ is a one-to-one map from $\{\x_i\}_{i=1}^n$ to a uniformly-distributed sample $\{\bu_i\}_{i=1}^n$, and thus its inverse map $\widehat{\phi}^{-1}$ is well-defined on $\{\bu_i\}_{i=1}^n$.
Following the notations in Algorithm~1, for $i=1,\ldots,r$, we can write the selected data point $\x^*_i$ as $\widehat{\phi}^{-1}(\bu_i^*)$.
Consequently, the proposed subsample estimator can be written as
\begin{align}
\widehat{p}_{\rm est.}(\bz) &=\frac{1}{r}\sum_{i=1}^r \left\{\prod_{j=1}^d K_h\left(z_j-(\widehat\phi^{-1}(\bu_i^*))_j\right)\right\}\nonumber\\
&=\frac{1}{r}\sum_{i=1}^r g_{z,{\rm est.}}(\bu_i^*), \label{e33}
\end{align}
where $g_{z,{\rm est.}}(\bu)= \prod_{j=1}^d K_h\left(z_j-(\widehat\phi^{-1}(\bu))_j\right),$ for $\bu\in \{\bu_i\}_{i=1}^n.$

Let
\begin{align}
\pzxr 
&=\frac{1}{r}\sum_{i=1}^r g_{z}(\bu_i^*)\label{e3}.
\end{align}

The MSE of the proposed estimator, i.e., $\mbox{MSE}(\widehat{p}_{\rm est.}(\bz))$, can be bounded as follows,
\begin{align}
\mbox{MSE}(\widehat{p}_{\rm est.}(\bz)) & = \mbox{E}\Big(\widehat{p}_{\rm est.}(\bz)-p(\bz)\Big)^2\nonumber\\
& = \mbox{E}\Big(\widehat{p}_{\rm est.}(\bz)-\pzxr+\pzxr-p(\bz)\Big)^2\nonumber\\
&\le 2\mbox{E}\Big(\widehat{p}_{\rm est.}(\bz)-\pzxr\Big)^2+2\mbox{E}\Big(\pzxr-p(\bz)\Big)^2\nonumber\\
& = 2\mbox{E}\Big(\widehat{p}_{\rm est.}(\bz)-\pzxr\Big)^2+\mbox{E}\Big(\pzxr-\mbox{E}(\pzx)+\mbox{E}(\pzx)-p(\bz)\Big)^2\nonumber\\
& \leq 2\mbox{E}\Big|\widehat{p}_{\rm est.}(\bz)-\pzxr\Big|^2+2\mbox{E}\Big|\pzxr-\mbox{E}(\pzx) \Big|^2+2\mbox{E}\Big|\mbox{E}(\pzx)-p(\bz) \Big|^2.\label{eq:s92}
\end{align}

It is known that under Conditions~(a) and (b), 
\begin{eqnarray} \label{e9}
\mbox{E}\Big|\mbox{E}(\pzx)-p(\bz) \Big|^2=O(h^4),
\end{eqnarray}
see \citet{scott2015multivariate} for more details.
In the following, we derive the upper bound for the first and the second term of the right-hand-side of Inequality~(\ref{eq:s92}), respectively.

We first show that, under Conditions 4 and 6, we have \begin{eqnarray}\label{s12}
K_h\left(z_j-((\phi^*)^{-1}(\bu))_j\right)\le C_1 \quad\mbox{   and   }\quad K_h\left(z_j-(\widehat\phi^{-1}(\bu))_j\right)\le C_1
\end{eqnarray}
for some positive constant $C_1$, $j=1,\ldots, d$
This is because, if there exists a $z_j$ and $\bu$ such that $K_h\left(z_j-((\phi^*)^{-1}(\bu))_j\right)= \infty$; then Condition~6 indicates one can find a non empty set $\mathcal{S}$, such that $K_h\left(z_j-((\phi^*)^{-1}(\bu^+))_j\right)= \infty$ for any $z_j\in \mathcal{S}$. 
Consequently, we have $\int_\mathcal{S} K_h^2\left(z_j-((\phi^*)^{-1}(\bu^+))_j\right)\mbox{d}z_j= \infty,$ which leads to a contradiction.

Using Inequalities~(\ref{s12}), Condition 6, and Lemma~\ref{lem_4}, we have
\begin{align}
 |g_z(\bu)-g_{z,{\rm est.}}(\bu)|&=\left|\prod_{j=1}^d K_h\left(z_j-((\phi^*)^{-1}(\bu))_j\right)-\prod_{j=1}^d K_h\left(z_j-(\widehat\phi^{-1}(\bu))_j\right) \right |\nonumber\\
    &\le C_1^{d-1}\sum_{j=1}^d L\|((\phi^*)^{-1}(\bu))_j-(\widehat\phi^{-1}(\bu))_j\|_2\nonumber\\
    &=C_1^{d-1}L\|((\phi^*)^{-1}(\bu))-(\widehat\phi^{-1}(\bu))\|_1,   \label{eq:s4}
\end{align}
where $\|\cdot\|_2$ and $\|\cdot\|_1$ are the $l_2$ norm and $l_1$ norm, respectively.

Combining Equations~(\ref{e33}),(\ref{e3}) and (\ref{eq:s4}), for $d\ge 3$, we have
\begin{align}
    |\pzxr-\widehat{p}_{\rm est.}(\bz)|&\le\frac{1}{r}\sum_{i=1}^r|g_z(\bu_i^*)-g_{z,{\rm est.}}(\bu_i^*)|\nonumber\\
    &\le C_1^{d-1}L\sup_{\bu\in\{\bu_i\}_{i=1}^n}\|((\phi^*)^{-1}(\bu))-((\widehat\phi)^{-1}(\bu))\|_1\nonumber\\
    &\le C_1^{d-1}L{d}\sup_{\bu\in\{\bu_i\}_{i=1}^n}\|((\phi^*)^{-1}(\bu))-((\widehat\phi)^{-1}(\bu))\|_\infty\nonumber\\
    &=O_p\left(\frac{\log(n)^{1/d}}{n^{1/d}}\right) \label{eq:s8}\\
    &=O_p\left(\frac{\log(n)^{1/d}}{\log(r)^{1/d}}\right)O_p\left(\frac{\log(r)^{1/d}}{r}\right)O_p\left(\frac{r}{n^{1/d}}\right)\nonumber\\
    &=O_p\left(\frac{\log(r)^{1/d}}{r}\right),\label{eq:s9}
\end{align}
where $\|\cdot\|_\infty$ is the $l_\infty$.
Here, Equation~(\ref{eq:s8}) comes from (\ref{eq:lems4}) in Lemma \ref{lem_5}, and Equation~(\ref{eq:s9}) comes from the assumption that $r=O(n^{1/d})$.
For the case when $d=2$, according to Lemma~\ref{lem_5}, we have
\begin{align}
    |\pzxr-\widehat{p}_{\rm est.}(\bz)|
    &=O_p\left(\frac{\log(n)^{1/d+1/4}}{n^{1/d}}\right) =O_p\left(\frac{\log(r)^{1/d+1/4}}{r}\right).\label{eq:s15}
\end{align}
Combining Equations~(\ref{eq:s9}) and (\ref{eq:s15}), for $d\ge 2$, we have
\begin{equation}\label{eq:s16}
    \mbox{E}|\pzxr-\widehat{p}_{\rm est.}(\bz)|
     =O\left(\frac{\log(r)^{1/d+1/4}}{r}\right).
\end{equation}

Next, we consider the upper bound for $(\pzxr-\mbox{E}\left(\pzx\right))^2$.
Combining the results in Equations~(\ref{e2}), (\ref{e3}) and Lemma~\ref{lem_1}, we have,
\begin{align}
\left|\pzxr-\mbox{E}\left(\pzx\right) \right|
& = \left|\frac{1}{r}\sum_{i=1}^r g_{z}(\bu_i^*)-\int_{[0,1]^d}g_{z}(\bu)\mbox{d}\bu\right| \leq D^*(\U_r^*)\mathcal{V}(g_{\bz})\label{e0}.
\end{align}

Following the definition of the total variation, we have
\[
\mathcal{V}(g_{z}) = \int_{[0,1]^d}\|\bigtriangledown g_{z}(\bu)\|\mbox{d}\bu,
\]
where $\|\cdot\|$ is the $l_2$ norm, and $\bigtriangledown g_{z}(\bu) = \left(\frac{\partial g_{z}(\bu)}{\partial u_1},..., \frac{\partial g_{z}(\bu)}{\partial u_d}\right)^T$. 
To simplify the expression of $g_z(\bu)$, we let
\[
\mathcal{K}(\x) = \prod_{j=1}^d K(x_j), \quad \x \in \mathbb{R}^d.
\]
One thus has $g_z(\bu)=\frac{1}{h^d}\mathcal{K}\left(\frac{\bz-(\phi^*)^{-1}(\bu)}{h}\right)$. Let $\bm\omega = \frac{\bz-(\phi^*)^{-1}(\bu)}{h}$, we have
\[
\bigtriangledown g_{z}(\bu) = \frac{1}{h^d}J_{\bm\omega \to \bu}^T\bigtriangledown \mathcal{K}(\bm\omega),
\]
where 
\[
J_{\bm\omega \to \bu}=
\begin{bmatrix}
    \frac{\partial \omega_1}{\partial u_1} & \dots  & \frac{\partial \omega_1}{\partial u_d} \\
    \vdots & \ddots & \vdots \\
    \frac{\partial \omega_d}{\partial u_1} & \dots  & \frac{\partial \omega_d}{\partial u_d}
\end{bmatrix}.
\]
Similarly, we define
\[
J_{\bu \to \bm\omega}=
\begin{bmatrix}
    \frac{\partial u_1}{\partial \omega_1} & \dots  & \frac{\partial u_1}{\partial \omega_d} \\
    \vdots & \ddots & \vdots \\
    \frac{\partial u_d}{\partial \omega_1} & \dots  & \frac{\partial u_d}{\partial \omega_d}
\end{bmatrix},
J_{\phi^*}=
\begin{bmatrix}
    \frac{\partial (\phi^*(\x))_1}{\partial x_1} & \dots  & \frac{\partial (\phi^*(\x))_1}{\partial x_d} \\
    \vdots & \ddots & \vdots \\
    \frac{\partial (\phi^*(\x))_d}{\partial x_1} & \dots  & \frac{\partial (\phi^*(\x))_d}{\partial x_d}
\end{bmatrix}, 
\]
and
\[
J_{(\phi^*)^{-1}}=
\begin{bmatrix}
    \frac{\partial ((\phi^*)^{-1}(\bu))_1}{\partial u_1} & \dots  & \frac{\partial ((\phi^*)^{-1}(\bu))_1}{\partial u_d} \\
    \vdots & \ddots & \vdots \\
    \frac{\partial ((\phi^*)^{-1}(\bu))_d}{\partial u_1} & \dots  & \frac{\partial ((\phi^*)^{-1}(\bu))_d}{\partial u_d}
\end{bmatrix}.
\]

Notice that $J_{\bm\omega \to \bu} = -\frac{1}{h}J_{(\phi^*)^{-1}}$, one thus has
\[
\bigtriangledown g_{z}(\bu) = \frac{1}{h^{d+1}}J_{(\phi^*)^{-1}}^T\bigtriangledown \mathcal{K}(\bm\omega).
\]
Using the Jensen's inequality, we have
\begin{align}
\mathcal{V}^2(g_{z}) & \leq\int_{[0,1]^d}||\bigtriangledown g_{z}(\bu)||^2\mbox{d}\bu\nonumber\\
& = \int_{[0,1]^d}(\bigtriangledown g_{z}(\bu))^T\bigtriangledown g_{z}(\bu)\mbox{d}\bu\nonumber\\
& = \frac{1}{h^{2d+2}}\int_{[0,1]^d}(\bigtriangledown \mathcal{K}(\bm\omega))^TJ_{(\phi^*)^{-1}}J_{(\phi^*)^{-1}}^T\bigtriangledown \mathcal{K}(\bm\omega)\mbox{d}\bu\nonumber\\
& = \frac{1}{h^{2d+2}}\int_{\Omega}(\bigtriangledown \mathcal{K}(\bm\omega))^TJ_{(\phi^*)^{-1}}J_{(\phi^*)^{-1}}^T\bigtriangledown \mathcal{K}(\bm\omega)|\det(J_{\bu \to \bm\omega})|d\bm\omega\nonumber\\
& = \frac{1}{h^{d+2}}\int_{\Omega}(\bigtriangledown \mathcal{K}(\bm\omega))^TJ_{(\phi^*)^{-1}}J_{(\phi^*)^{-1}}^T\bigtriangledown \mathcal{K}(\bm\omega)|\det(J_{\phi^*})|d\omega\label{e4},
\end{align}
where the fact that $\bu = \phi^*(\bz-h\bm\omega)$, $J_{\bu \to \bm\omega} = -hJ_{\phi^*}$, and $|\det(J_{\bu \to \bm\omega})| = h^d|\det(J_{\phi^*})|$ are used in the last equation.

Notice that 
\begin{align}
(\bigtriangledown \mathcal{K}(\bm\omega))^TJ_{(\phi^*)^{-1}}J_{(\phi^*)^{-1}}^T\bigtriangledown \mathcal{K}(\bm\omega) & = \mathrm{tr}\left((\bigtriangledown \mathcal{K}(\bm\omega))^TJ_{(\phi^*)^{-1}}J_{(\phi^*)^{-1}}^T\bigtriangledown \mathcal{K}(\bm\omega)\right)\nonumber\\
& = \mathrm{tr}\left(\bigtriangledown \mathcal{K}(\bm\omega)(\bigtriangledown \mathcal{K}(\bm\omega))^TJ_{(\phi^*)^{-1}}J_{(\phi^*)^{-1}}^T\right)\nonumber\\
& \leq \mathrm{tr}\left(\bigtriangledown \mathcal{K}(\bm\omega)(\bigtriangledown \mathcal{K}(\bm\omega))^T\right)\mathrm{tr}\left(J_{(\phi^*)^{-1}}J_{(\phi^*)^{-1}}^T\right)\label{e5}.
\end{align}

For the first term in the right-hand-side of Inequality~(\ref{e5}), i.e., $\mathrm{tr}\left(\bigtriangledown \mathcal{K}(\bm\omega)(\bigtriangledown \mathcal{K}(\bm\omega))^T\right)$, we have
\begin{align}
\mathrm{tr}\left(\bigtriangledown \mathcal{K}(\bm\omega)(\bigtriangledown \mathcal{K}(\bm\omega))^T\right) & = \mathrm{tr}\left((\bigtriangledown \mathcal{K}(\bm\omega))^T\bigtriangledown \mathcal{K}(\bm\omega)\right)\nonumber\\
& = (\bigtriangledown \mathcal{K}(\bm\omega))^T\bigtriangledown \mathcal{K}(\bm\omega)\nonumber\\
& = \sum_{k=1}^d\left(\left\{\prod_{j\neq k}K^2(\omega_j)\right\}\left(K'(\omega_k)\right)^2\right)\label{e66}.
\end{align}

For the second term in the right-hand-side of Inequality~(\ref{e5}), we have
\begin{eqnarray}\label{e6}
\mathrm{tr}\left(J_{(\phi^*)^{-1}}J_{(\phi^*)^{-1}}^T\right)\leq C,
\end{eqnarray}
for a positive constant $C$.
This is because $(\phi^*)^{-1}$ is an optimal transport map that defined on a bounded domain $[0,1]^d$.
Furthermore, according to Lemma S3, the derivative of $(\phi^*)^{-1}$ is continuous.
Consequently, all the entries in $J_{(\phi^*)^{-1}}$ are finite, and thus Inequality~(\ref{e6}) can be satisfied.
Plugging Equation~(\ref{e66}) and Inequality~(\ref{e5}) back into Equation~(\ref{e4}), we have
\begin{align}
\mathcal{V}^2(g_{z}) & \leq \frac{1}{h^{d+2}}C\int\cdots\int\sum_{k=1}^d\left(\left\{\prod_{j\neq k}K^2(\omega_j)\right\}\left(K'(\omega_k)\right)^2\right)d\omega_1\cdots d\omega_d\nonumber\\
& = \frac{1}{h^{d+2}}C\sum_{k=1}^d\left\{\prod_{j\neq k}\int_{\Omega_j}K^2(\omega_j)d\omega_j\int_{\Omega_k}\left(K'(\omega_k)\right)^2d\omega_k\right\}\nonumber\\
& = O\left(\frac{1}{h^{d+2}}\right)\label{e7}.
\end{align}

Combining Inequalities~(\ref{e7}) and ~(\ref{e0}), we have 
\begin{eqnarray}\label{e8}
\mbox{E}\Big(\pzxr-\mbox{E}\left(\pzx\right) \Big)^2 \leq \Big(D^*(\U_r^*)\Big)^2\mathcal{V}^2(g_{z}) = O\left(\frac{1}{r^{2(1-\delta)} h^{d+2}}\right).
\end{eqnarray}


Plugging (\ref{eq:s16}), (\ref{e8}) and (\ref{e9}) into (\ref{eq:s92}) yields  
\begin{eqnarray*}
\mbox{MSE}(\pzxr) &=&  O\left(\frac{\log(r)^{2/d+1/2}}{r^2}\right)+ O{\left(\frac{1}{r^{2(1-\delta)} h^{d+2}}\right)} + O(h^4)\\
&=&O{\left(\frac{1}{r^{2(1-\delta)} h^{d+2}}\right)} + O(h^4).
\end{eqnarray*}
Consequently, when $h=O(r^{-\frac{2(1-\delta)}{6+d}})$, we have
\begin{equation*}
\mbox{MSE}(\pzxr) = O(r^{-\frac{8(1-\delta)}{6+d}}).
\end{equation*}


\section{CPU time comparison}

We compare the CPU time for different subsampling methods. 
We consider $n=10^4,10^5, 10^6$, $p=10$, and we set the subsample size $r\approx n^{1/2}$.
The results are shown in Table~\ref{table1}.
We observe that our method is reasonably efficient even when $n=10^6$.
These results indicate the proposed method is able to provide effective and efficient subsamples for the datasets of large $n$.

\begin{table}[h!]\caption{Average CPU time (in seconds) for subsampling}\label{table1}
\centering
\begin{tabular}{ cccc }
\hline
 & SPARTAN & SP & KM \\
\hline
$n=10^4$ & 0.33 & 2.11 & 0.44 \\
$n=10^5$ & 2.26 & 5.42 & 19.50 \\
$n=10^6$ & 21.19 & 18.95 & 1167.83 \\
\hline
\end{tabular}
\end{table}

\clearpage
\bibliographystyle{agsm}
\bibliography{ref}